\documentclass[letterpaper, 10 pt, conference]{ieeeconf}
\IEEEoverridecommandlockouts

\usepackage{amsmath}
\usepackage{mathrsfs}
\usepackage{dsfont}

\usepackage{cite}
\usepackage{amssymb}
\usepackage[ruled,linesnumbered,noend]{algorithm2e}
\usepackage{float}
\usepackage{textcomp}
\usepackage{bm}

\usepackage{hyperref}
\usepackage{xcolor}
\hypersetup{
    colorlinks,
    linkcolor={black},
    citecolor={blue!50!black},
    urlcolor={blue!80!black}
}
\usepackage[normalem]{ulem}
\usepackage{amsthm}
\theoremstyle{plain}
\usepackage{cleveref}
\crefname{theorem}{Th.}{Th.}
\crefname{corollary}{Corollary}{Corollary}
\crefname{section}{Sec.}{Sec.}
\crefname{assum}{Assumption}{Assumption}
\crefname{algorithm}{Alg.}{Alg.}
\crefname{lemma}{Lem.}{Lem.}
\crefname{figure}{Fig.}{Fig.}
\crefname{table}{Table}{Table}

\newtheorem{theorem}{Theorem}
\newtheorem{lemma}{Lemma}

\newtheorem{assum}{Assumption}

\newtheorem{corollary}{Corollary}

\usepackage{makecell}

\usepackage{graphicx}
\usepackage{tikz}
\usepackage{float}
\usetikzlibrary{arrows}
\usetikzlibrary{arrows.meta}
\usetikzlibrary{arrows,positioning} 
\usetikzlibrary{shapes}
\usetikzlibrary{calc}
\usetikzlibrary{fadings}
\usetikzlibrary{decorations.pathreplacing}
\usetikzlibrary{decorations.text}

\usepackage{subfigure}

\usepackage{amsfonts}

\usepackage{scalerel}
\def\msquare{\mathord{\scalerel*{\Box}{gt}}}
\def\mdiamond{\mathord{\scalerel*{\Diamond}{gt}}}
\makeatletter \providecommand\dotdiamond{\mathpalette\@barred\mdiamond} \def\@barred#1#2{\ooalign{\hfil$#1\cdot$\hfil\cr\hfil$#1#2$\hfil\cr}}  \makeatother
\makeatletter \providecommand\dotbox{\mathpalette\@burrow\msquare} \def\@burrow#1#2{\ooalign{\hfil$#1\cdot$\hfil\cr\hfil$#1#2$\hfil\cr}}  \makeatother

\usepackage[top=54pt, left=54pt, right=54pt, bottom=54pt]{geometry}
\newcommand{\real}{\mathbb{R}}
\newcommand{\state}{\mathbf{x}}
\newcommand{\pose}{\mathbf{p}}
\newcommand{\statedim}{{n_\state}}
\newcommand{\stateset}{\mathcal{X}}
\newcommand{\ctrl}{\mathbf{u}}
\newcommand{\ctrldim}{{n_\ctrl}}
\newcommand{\ctrlset}{\mathcal{U}}
\newcommand{\disturb}{\mathbf{d}}
\newcommand{\disturbdim}{{n_\disturb}}
\newcommand{\disturbset}{\mathcal{D}}

\newcommand{\local}{^{\rm loc}}
\newcommand{\glob}{^{\rm G}}
\newcommand{\trueVal}{^{\dagger}}

\newcommand{\ham}{\mathcal{H}}
\newcommand{\safe}{\mathcal{L}}
\newcommand{\unsafe}{\mathcal{G}}
\newcommand{\reach}{\mathcal{RA}}
\newcommand{\obs}{\mathcal{O}}

\newcommand{\hyparam}{\phi}
\newcommand{\hyparamSet}{\Phi}
\newcommand{\operator}{\Psi}
\newcommand{\operatorTrue}{\operator\trueVal}
\newcommand{\learned}{_\hyparam}
\newcommand{\pdeInput}{\mathcal{A}}
\newcommand{\pdeOutput}{\mathcal{Y}}

\newcommand{\Graph}{\mathrm{Gr}}
\newcommand{\Tree}{\mathrm{Tr}}
\newcommand{\Node}{\mathcal{V}}
\newcommand{\Edge}{\mathcal{E}}
\newcommand{\vNode}{\mathbf{p}}
\newcommand{\wNode}{\mathbf{q}}

\usepackage{booktabs}
\usepackage{makecell}
\usepackage{pifont}

\usepackage{caption}
\renewcommand{\thetable}{\Roman{table}}

\begin{document}

\title{\textbf{\huge Contingency-Aware Planning via Certified Neural Hamilton–Jacobi Reachability}\vspace*{3pt}}

\author{Kasidit~Muenprasitivej and Derya~Aksaray
\thanks{The authors are with the Department of Electrical and Computer Engineering, Northeastern University, Boston, MA, USA, 02115 (\{muenprasitivej.k, d.aksaray\}@northeastern.edu)}
}

\maketitle

\thispagestyle{empty}

\begin{abstract}
Hamilton-Jacobi (HJ) reachability provides formal safety guarantees for dynamical systems, but solving high-dimensional HJ partial differential equations limits its use in real-time planning. This paper presents a contingency-aware multi-goal navigation framework that integrates learning-based reachability with sampling-based planning in unknown environments. We use Fourier Neural Operator (FNO) to approximate the solution operator of the Hamilton–Jacobi–Isaacs variational inequality under varying obstacle configurations. We first provide a theoretical under-approximation guarantee on the safe backward reach-avoid set, which enables formal safety certification of the learned reachable sets. Then, we integrate the certified reachable sets with an incremental multi-goal planner, which enforces reachable-set constraints and a recovery policy that guarantees finite-time return to a safe region. Overall, we demonstrate that the proposed framework achieves asymptotically optimal navigation with provable contingency behavior, and validate its performance through real-time deployment on KUKA’s youBot in Webots simulation\footnote{Videos of the simulated experiment on KUKA's youBot in Webots simulation can be found at \url{https://youtu.be/EuYUzjcPtdQ}\label{fn:youtube}}.
\end{abstract}
\section{Introduction}
Ensuring provably safe autonomy while maintaining computational tractability remains a central challenge in robotic planning. Contingency planning addresses this by equipping the system with certified fallback maneuvers that guarantee recovery in the face of adversary events. In this paper, we propose a learning-enabled Hamilton–Jacobi (HJ) reach-avoid framework that ensures a robot always maintains a finite-time recovery path to a safe region during multi-goal navigation in initially unknown environments.
\subsection{Related Work}

\subsubsection{Contingency Planning and Safety Certificates}
Contingency planning frameworks commonly employ backup behaviors paired with safety certificates. For example,  
Model Predictive Control (MPC) (e.g., \cite{kerrigan2018:mpc}) enforces safety through online constrained optimization, but may suffer infeasibility under tight computational budgets. Control Barrier Functions (CBFs) (e.g., \cite{ames2017:cbf}) cast safety enforcement as a quadratic program, though their applicability depends on constructing valid barrier certificates. Precomputed fallback policies (e.g., \cite{chen2021:backupcbf}) switch to predetermined safe behaviors upon contingency detection, though requiring careful design of the switching logic.
Hamilton–Jacobi (HJ) reachability \cite{margellos2011:hjReachAvoid} 
provides certified safe sets against worst-case disturbance and directly synthesizes optimal recovery policies from the value function gradient. However, the prohibitive computational cost of solving the associated PDE has historically limited its use in online planning for unknown environments. Recent advances in learning-based solvers have substantially mitigated this scalability barrier, which enables efficient estimation of HJ reachable sets in real time.

\begin{figure}[t!]
    \centering
    \includegraphics[width=0.95\linewidth]{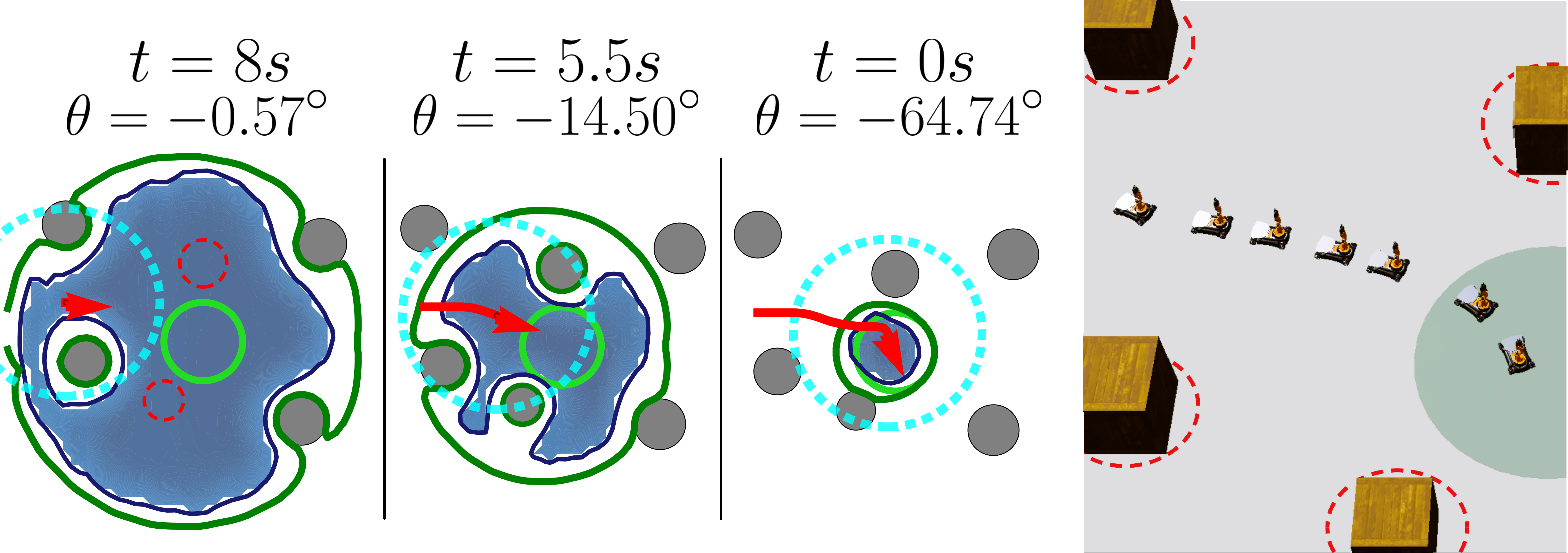}
    \caption{Preview of finite-time guaranteed contingency planning using Fourier Neural Operator–based Hamilton–Jacobi reachability. The robot state lies within a safe under-approximation (blue) of the true reachable set (green). The final trajectory (red, solid) is obtained by following a gradient-based policy toward the safe set (light green) while avoiding obstacles (grey), which may be unknown (red, dashed) and detected by onboard sensors (cyan). (Rightmost) Realization of the contingency control policy on KUKA’s youBot.}
    \label{fig:highlight}
        \vspace{-0.1in}
\end{figure}

\subsubsection{Learning-Based Hamilton Jacobi Reachability}
Recent works approximate the HJI-VI solution using neural PDE solvers trained via residual minimization (e.g., DeepReach~\cite{tomlin2021:deepReach}, ExactBC~\cite{singh2025:exactbc}, and physics-informed approaches~\cite{tayal2025:PIML-HJ}). 
These methods learn a fixed PDE instance and require retraining when dynamics, target sets, or horizons change, which limit their use in initially unknown environments. Parametric extensions~\cite{Borquez2023:parameterHJ} allow limited variability but remain confined to low-dimensional parameter families.
Alternative approaches reformulate reach-avoid problems via discrete-time Bellman operators~\cite{hsu2021:reachavoidrl} or classification-based policy learning~\cite{rubies2019:classifcation-based-hj}. 
While computationally attractive, these methods do not enforce full HJI-VI structural consistency, and safety certification relies on supervisory mechanisms rather than intrinsic PDE guarantees.
The work most closely related to ours is HJRNO~\cite{li2025:hjrno}, which employs Fourier Neural Operators to learn the HJ solution operator across varying obstacle configurations. 
However, it focuses on infinite-horizon invariant sets rather than finite-time reachability and does not provide formal guarantees on the learned safe sets.

\subsubsection{Multi-Goal Planning in Obstructed Environments}
Visiting multiple goals optimally is classically formulated as the Traveling Salesman Problem (TSP) (e.g., \cite{Johnson2008:TheTSP}).
In environments with obstacles, this extends to the Physical TSP (PTSP)~\cite{perez2012:physicTSP}, where edge costs are induced by motion planning (e.g., RRT* and PRM*~\cite{karaman2011:rrtStar}, SFF*~\cite{janos2021:sff}, and Lazy-TSP~\cite{englot2013:3dUnderwaterInspection}). Existing methods assume prior knowledge of obstacle configurations and do not account for online safety-restricted reachable regions. In contrast, we address contingency-aware multi-goal planning in partially unknown environments, where feasible routing must remain confined to dynamically updated reach-avoid sets.

\subsection{Contributions}
This paper presents a unified contingency-aware multi-goal planning framework with certified finite-time reachability guarantees. Our main contributions are:
\begin{enumerate}
    \item We learn \emph{the finite-time} Hamilton–Jacobi–Isaacs variational inequality (HJI-VI) solution operator using a Fourier Neural Operator (FNO) and establish a \emph{theoretical under-approximation} guarantee of the learned reach-avoid sets.

    \item We propose a resilient framework that integrates FNO-predicted reachable sets into an online multi-goal planner, which allows reactive updates under changing obstacle configurations without resolving the HJI-VI. This framework guarantees that, at every point during execution, the robot maintains a finite-time recovery path to at least one designated safe region under adversarial disturbances.
    
    \item We derive a modified gradient-based recovery policy that guarantees finite-time reachability to designated safe regions under adversarial disturbances.
  
    \item Finally, we demonstrate the contingency-aware multi-goals planning framework in real-time on KUKA's youBot in Webots simulation.
\end{enumerate}

\section{Preliminaries}\label{sec:prelim}
\subsection{Hamilton-Jacobi Reachability}\label{sec:hjr}
We follow the standard HJ reachability formulation \cite{claire2017:hjrOverview}. The system state $\state \in \stateset \subset \real^\statedim$ evolves as $\dot{\state} = f(\state,\ctrl,\disturb)$, with control $\ctrl \in \ctrlset \subset \real^\ctrldim$ and disturbance $\disturb \in \disturbset \subset \real^\disturbdim$. 
Let $\xi^{\ctrl,\disturb}_{\state,t}(\tau)$ denote the state reached at time $\tau$ starting from initial condition $(\state,t)$ under control $\ctrl(\cdot)$ and disturbance $\disturb(\cdot)$. 
Given a target set $\safe\in\stateset$ and unsafe set $\unsafe\in\stateset$, the backward reach--avoid set (BRAS) over horizon $[t,0]$ is

{\small
\[
    \reach(t)= \{ \state \mid \forall \disturb, \exists \ctrl, \forall \tau \in [t,0], \xi^{\ctrl,\disturb}_{\state,t}(\tau) \notin \unsafe,\xi^{\ctrl,\disturb}_{\state,t}(0) \in \safe \},
\]}

\noindent which characterizes all states from which the system can reach $\safe$ exactly at $t=0$ while always avoiding $\unsafe$.

Let $\safe = \{\state \mid \ell(\state) \le 0\}$ and $\unsafe = \{\state \mid g(\state) \ge 0\}$ for continuous functions $\ell, g : \real^\statedim \to \real$. The reach--avoid condition is encoded in cost functional
\[
\small J(\state,t,\ctrl(\cdot),\disturb(\cdot)) = \max\{\ell(\xi^{\ctrl,\disturb}_{\state,t}(0)),\, \max_{\tau \in [t,0]} g(\xi^{\ctrl,\disturb}_{\state,t}(\tau))\},\]
where $J \le 0$ is the optimality criterion.
This yields a zero-sum differential game, and the value function
\begin{align}\nonumber\small
V(\state,t) := \inf_{\ctrl(\cdot)}\sup_{\disturb(\cdot)}\, J\!\left(\state,t,\ctrl(\cdot),\disturb(\cdot)\right)
\end{align}
is the unique viscosity solution (e.g., 
\cite{margellos2011:hjReachAvoid}) 
of the HJI variational inequality
\begin{equation}\label{eq:hji-vi}
\small{
\max\!\Big\{\partial_t V(\state,t) + \ham\!\left(\state,\nabla_x V(\state,t)\right),\; g(\state) - V(\state,t)\Big\} = 0,}
\end{equation}
with terminal condition $V(\state,0) = \max\{\ell(\state), g(\state)\}$ and Hamiltonian
\begin{align}\nonumber
\ham(\state,t) := \sup_{\disturb\in\mathcal{D}}\inf_{\ctrl\in\mathcal{U}}\left\langle \nabla_x V(\state,t),\, f(\state,\ctrl,\disturb)\right\rangle.
\end{align}
The BRAS is thus the zero sublevel set $\reach(t) = \{\state \mid V(\state,t) \le 0\}$, and the optimal feedback control enforcing reachability under worst-case disturbances is
\begin{equation}\label{eq:hjr-opt-ctrl}
\ctrl^\star(\state,t) = \arg\min_{\ctrl\in\mathcal{U}}\sup_{\disturb\in\mathcal{D}}\left\langle \nabla_x V(\state,t),\, f(\state,\ctrl,\disturb)\right\rangle.
\end{equation}

\subsection{Fourier Neural Operator}\label{sec:fno}
Let $D=(D\local \times \mathbf{H}) \subset \real^d$ be a bounded open domain with coordinates $(\mathbf{p}, \mathbf{h}) \in D$,  where $\mathbf{p} \in D\local$ denotes spatial variables and $\mathbf{h}\in \mathbf{H}$ denotes parametric variables. 
Let 
$\pdeInput = \pdeInput(D;\real^{d_a})$ and 
$\pdeOutput = \pdeOutput(D;\real^{d_y})$ denote separable Banach spaces of 
functions on $D$. Given a solution operator 
$\operatorTrue:\pdeInput \to \pdeOutput$ associated with a parametric PDE 
and a finite set of observations $\{a_j,y_j\}_{j=1}^N$, where the inputs $a_j$ 
are i.i.d.\ samples on $\pdeInput$ and $y_j=\operatorTrue(a_j)$, the 
objective is to approximate $\operatorTrue$ by a parametric operator 
$\operator\learned: \pdeInput \times \hyparamSet \to \pdeOutput$, where 
$\hyparamSet$ is finite-dimensional and learned from data. The Fourier Neural Operator (FNO) \cite{li2020:neuralOperator} is a 
mesh-free architecture for approximating solution operators $\operatorTrue$ by 
producing a single set of parameters $\hyparam\in \Phi$ that generalizes across 
different discretizations and enables zero-shot super-resolution. Given an 
input function $a \in \pdeInput$, the overall mapping with $B\in\mathbb{N}$ Fourier layers is defined as
\begin{equation}
\nonumber
y = \operator\learned(a)
:= \Pi \circ \mathrm{Layer}_{B-1} \circ \cdots \circ
\mathrm{Layer}_{0} \circ \Lambda(a),
\end{equation}
where each layer is a latent function taking values in higher dimension $\real^{d_\mathrm{v}}$, and $\Lambda:\real^{d_a}\to\real^{d_\mathrm{v}}$ and 
$\Pi:\real^{d_\mathrm{v}}\to\real^{d_y}$ denote the lifting and projection 
operators, respectively. The latent function $\mathrm{v}_b$ is defined as

\begin{equation}
\small\nonumber
\label{eq:fno-block}
\mathrm{v}_{b+1}(x)
= \mathrm{Layer}_{b} \doteq \sigma\!\left(
W \mathrm{v}_{b}(x) + \mathcal{F}^{-1}\!\Bigl(R\learned \cdot 
\mathcal{F}(\mathrm{v}_b)\Bigr)(x)
\right),
\end{equation}
\normalsize
\noindent where $W:\real^{d_\mathrm{v}}\to\real^{d_\mathrm{v}}$ is a pointwise linear map, $\sigma$ is an activation function, $\mathcal{F}$ and $\mathcal{F}^{-1}$ are the Fourier transform and its inverse respectively, and $R\learned$ is a learned tensor that parameterizes the kernel in the Fourier domain via pointwise spectral multiplication.


\begin{theorem}[Universal Approximation by FNO]\label{theorem:uat}
Let $\overline{D}$ be a compact domain with 
Lipschitz boundary, and let $\operatorTrue : \mathscr{F} \to \mathscr{F}'$ be a continuous 
operator between function spaces on $\overline{D}$. For any $\varepsilon > 0$ and compact 
$\mathcal{K} \subset \mathscr{F}$, there exists a nonlocal FNO $\operator\learned : 
\mathscr{F} \to \mathscr{F}'$ such that
\begin{equation}\nonumber
    \sup_{g \in \mathcal{K}} \left\| \operatorTrue(g) - \operator\learned(g) 
    \right\|_{\mathscr{F}'} \le \varepsilon,
\end{equation}
where the true operator $\operator\trueVal$ maps $\mathbb{C}^s(\overline D ; \real)\to\mathbb{C}^{s'}(\overline D ; \real)$ in continuous function spaces, or 
$\mathbb{W}^{s,p}(\overline D ; \real)\to\mathbb{W}^{s',p'}(\overline D ; \real)$ in Sobolev spaces, with $s,s'\geq 0$ and $p,p'\in[1,\infty).$ 
\end{theorem}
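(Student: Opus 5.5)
The plan is to reproduce the Kovachki--Lanthaler--Mishra argument for universal approximation of FNOs, which goes through Fourier projection and a finite-dimensional reduction. Without loss of generality we reduce to the periodic setting. Since $\overline D$ is compact with Lipschitz boundary, there is a continuous linear extension operator $\mathcal{E}$ (Stein extension for Sobolev spaces, Whitney-type extension for $\mathbb{C}^s$). It maps functions on $\overline D$ to functions supported in a box $\mathbb{T}^d$, which we view as a torus. Composing with a smooth cutoff gives periodic functions. Restriction $\mathcal{R}$ back to $\overline D$ is also continuous. It therefore suffices to approximate the continuous operator $\mathcal{E}\circ\operatorTrue\circ\mathcal{R}$ on the compact set $\mathcal{E}(\mathcal{K})$ in periodic spaces, because the FNO layers act on the torus and then restrict.

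Next we truncate to finitely many Fourier modes. Let $P_N$ denote projection onto Fourier modes with $|k|\le N$. For a compact set $\mathcal{K}'$ in $\mathbb{C}^s$ or $\mathbb{W}^{s,p}$, we have $\sup_{g\in\mathcal{K}'}\|g-P_Ng\|\to 0$. In the $\mathbb{C}^s$ case we use Fejér-type smoothing instead of sharp truncation, because that gives uniform convergence on compacts. By continuity of $\operatorTrue$ on compacts (uniform continuity there), we get
\[
\sup_{g}\|\operatorTrue(g)-P_N\operatorTrue(P_Ng)\|\le\varepsilon/2
\]
for $N$ large. The operator $P_N\circ\operatorTrue\circ P_N$ factors through finite dimensions. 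The encoder is $g\mapsto\hat g_k$ for $|k|\le N$, which takes values in $\mathbb{C}^{K}$. The finite-dimensional map is a continuous $G:\mathbb{C}^K\to\mathbb{C}^K$. The decoder is $c\mapsto\sum_k c_k e^{ik\cdot x}$.

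We then realize each piece with FNO layers. The encoder is implemented as follows. The lifting $\Lambda$ appends channels containing $\cos(k\cdot x)$ and $\sin(k\cdot x)$, which are obtainable since the spectral multiplier $R_\phi$ can act on a constant function. A Fourier layer with $R_\phi$ selecting mode $0$ then computes the averages $\int g(x)e^{-ik\cdot x}dx$. These averages are constant in $x$, so the Fourier coefficients become constant latent channels. Products are approximated by pointwise MLPs, i.e.\ compositions of $W$ and $\sigma$. The map $G$ restricted to the compact image set is approximated to any accuracy by a pointwise neural network, using the classical finite-dimensional universal approximation theorem applied channelwise to constant functions. The decoder multiplies the constant channels by the basis functions $e^{ik\cdot x}$ held in other channels, again via approximate products, and sums them through $\Pi$.

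The main obstacle is error propagation. The network approximations, in particular the approximate multiplication and the approximation of $G$, are only uniform on bounded sets, and they must be controlled in the $\mathbb{C}^{s'}$ or $\mathbb{W}^{s',p'}$ norm rather than just $L^\infty$. To handle this we keep every intermediate quantity on compact, finite-dimensional sets: coefficients live in a compact subset of $\mathbb{C}^K$, and the basis functions are fixed smooth functions. The derivative norms of the output are then bounded by finitely many coefficient errors times fixed constants depending on $N$. Choosing the finite-dimensional accuracy after fixing $N$ yields total error at most $\varepsilon$.
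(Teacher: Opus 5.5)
The paper gives no argument of its own: it cites the Kovachki--Lanthaler--Mishra and Lanthaler et al.\ theorems. Your outline follows the Fourier-truncation route of the former, but the encoder step fails as written. A spectral multiplier is diagonal in the Fourier basis, so $\mathcal{F}^{-1}(R_\phi\cdot\mathcal{F}(c))$ is constant whenever $c$ is, and pointwise maps send constants to constants. No FNO layer can turn a constant channel into $\cos(k\cdot x)$. More fundamentally, if $x$ enters nowhere, every component ($\Lambda$, $W$, $\sigma$, $\Pi$, constant biases, Fourier multipliers) commutes with translations of the torus. The whole network is then translation-equivariant and maps $g\equiv 0$ to a constant, so it cannot approximate even the constant operator $g\mapsto\cos(x_1)$ on $\mathcal{K}=\{0\}$. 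The basis channels have to come from positional information, e.g.\ the coordinate channels $x,y,\theta,t$ that the paper's FNO takes as input, via pointwise networks in $x$. These only approximate $\cos(k\cdot x)$ and $\sin(k\cdot x)$, so that error must also be carried through both encoder and decoder.

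Your error-propagation paragraph also fails as stated, because not every intermediate quantity lives on a compact finite-dimensional set. (i) Your first product $g(x)\cos(k\cdot x)$ is formed on the raw input, which is not pointwise bounded when $\mathbb{W}^{s,p}\not\subset L^\infty$ (e.g.\ $s=0$; a single $g\in L^p$ already forms a compact set). An MLP multiplier accurate only on bounded sets then gives no control. Apply an exact Fourier-multiplier layer first (truncation or Fej\'er weights, passed through ReLU via $\sigma(z)-\sigma(-z)=z$). After that, everything is a trigonometric polynomial with coefficients bounded by $C\|g\|_{L^1}$ uniformly on $\mathcal{K}$. (ii) Your decoder output $\sum_k m(\tilde c_k,\tilde\varphi_k(x))$ is not a trigonometric polynomial, so its $\mathbb{C}^{s'}$ or $\mathbb{W}^{s',p'}$ error is not bounded by coefficient errors; with ReLU it is in general not even $C^1$. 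You can either take a smooth activation and approximate the multiplier and basis maps together with their derivatives, or finish with one more exact truncation layer. In the latter case the output lies in the span of the modes $|k|\le N$, where all norms are equivalent, so the target-norm error is at most $C_N$ times the $L^1$ error of the latent. That is what actually justifies your final estimate. (iii) Since $p,p'=1$ are allowed, use Fej\'er or de la Vall\'ee Poussin means in every case, not only for $\mathbb{C}^s$. Dirichlet truncation is not uniformly bounded on $\mathbb{W}^{s,1}$, and ball truncation fails for every $p\neq 2$ when $d\ge 2$. Finally, $\mathcal{E}$ is not an FNO layer, so the operator on $\overline D$ must be defined as $\mathcal{R}\circ(\text{torus FNO})\circ\mathcal{E}$. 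The averaging construction the paper also cites avoids periodization entirely by encoding with averages $\int_D g\,\psi_j$ against smooth functions directly on $D$.
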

\begin{proof}
    This result follows directly from  \cite[Theorem 5 and 9]{kovachki2021:universalFNO}\cite[Theorem 1--2]{lanthaler2025:nonlocality}.
\end{proof}
\section{Problem Definition}\label{Section: Problem}

Consider a mobile robot navigating in a $2$D planar environment $\Omega\subset \real^2$ with $L\in\mathbb{N}$ pre-defined safe sets $\mathbf{L}=\{\safe_i\glob\}_{i=1}^{L} \subset\Omega$ centered at $\pose_{\ell_i}\doteq(x_{\ell_i}, y_{\ell_i}) \in \real^2$ and $H\in\mathbb{N}$ unknown obstacles $ \mathbf{O}=\{\obs_j\}_{j=1}^H\subset\Omega$ centered at $\pose_{\obs_j}\doteq(x_{\obs_j}, y_{\obs_j}) \in \real^2$. For each safe set $\safe\glob_i$,\footnote{The superscript $\mathrm{G}$ denotes quantities expressed in the global coordinate frame. It is applied only to $\safe$ and $\unsafe$, as these sets are originally defined in the local domain $D\local$ and then mapped to the global frame. Other variables directly defined in the global frame do not require this superscript.\label{fn:global_frame}} we define the unsafe set as $\unsafe\glob_i=\bigcup_j \obs_j $ for all $\obs_j$ within the local vicinity $D\local_i$ of $\safe\glob_i$, where $D\local_i \subset \real^2$.

Let a mobile robot governed by continuous dynamics $\dot \state = f(\state,\ctrl,\disturb)$, where $\state(t)\doteq (\pose_{\rm robot},\theta) \in \stateset \subset\real^3$ represents robot's position $\pose_{\rm robot} \doteq(x,y)\in \real^2$ and orientation $\theta\in\mathbb{S}^1$ at time $t\in\real_{\geq0}$,  $\ctrl(t) \in \ctrlset \subset \real^\ctrldim$ is the control input, and $\disturb(t) \in \disturbset \subset \real^\disturbdim$ is the disturbance. The robot is equipped with omnidirection sensor of radius $r_{\rm sense}>0$ which can detect an obstacle at $\pose_{\obs_i}$ if  
$
\|\pose_{\obs_i} - \pose_{r}\| \leq r_{\rm sense}.
$ We now state the following assumption regarding $\safe\glob$ and $\unsafe\glob$. 

\begin{assum}\label{assum:disjoinSafeUnsafe}
    The safe set $\safe\glob\subset\Omega$ is a closed disk of radius $r_{\ell} > 0$ centered at $\pose_{\ell}$, i.e., $\safe\glob = \{\pose \in \mathbb{R}^2 : \|\pose - \pose_{\ell}\| \leq r_{\ell}\}$. If detected, each obstacle $\obs_j\subset\Omega$ is approximated as a closed disk of radius $r_{\obs_j} > 0$ centered at $\pose_{\obs_j}$, such that $\unsafe\glob = \bigcup_i \obs_i$. Furthermore, all obstacles are assumed to be disjoint from the safe set, i.e.,
    $
        \|\pose_{\obs_j} - \pose_{\ell}\| > r_{\obs_j} + r_{\ell}, \quad \forall \obs_j\in D\local,
    $
    which ensures $\safe \cap \unsafe = \emptyset$.
\end{assum}

\textbf{Problem Statement}:
Under \cref{assum:disjoinSafeUnsafe}, given a planar environment $\Omega$ containing $H$ unknown obstacles $\mathbf{O}$ and $L$ \textit{sufficiently overlapping}, formally certified safe sets $\mathbf{L}$, a robot with dynamics $\dot{\state} = f(\state, \ctrl, \disturb)$ must accomplish the following: $(i)$ visit $K \geq 1$ goal locations $\mathbf{P}_{\rm goals}= \{\pose_{\mathrm{goal}_k}\}_{k=1}^K$ in a locally optimal sequence non-intersecting with $\mathbf{O}$ and $(ii)$ at every state $\state$ along the planned path, finite-time reachability to at least one certified safe set $\safe_i\glob \in \mathbf{L}$ is guaranteed—such that a reachability-based contingency policy remains available throughout execution.
\section{Proposed Method}
\textbf{Running Example (Unicycle):}
In this work, we consider the unicycle model with additive disturbance. The state is $\state=(x,y,\theta)\in\stateset$, the control is $\ctrl=(v,\omega)\in\ctrlset$, and the disturbance is $\disturb=(d_x,d_y,d_\theta)\in\disturbset$:
\begin{equation}\label{eq:unicycleModel}
\dot{x}=v\cos\theta + d_x,\qquad
\dot{y}=v\sin\theta + d_y,\qquad
\dot{\theta}=\omega + d_\theta,
\end{equation}
where $v\in[v_{\min},v_{\max}]$, $\omega\in\omega_{\max}[-1,1]$ with $v_{\min}\ge0$, and disturbance set satisfies
$
\|(d_x,d_y)\|_2 \le \bar d,
\quad
d_\theta\in[-\bar d_\theta,\bar d_\theta].
$ 

Additionally, \eqref{eq:unicycleModel} is uniformly Lipschitz continuous in $\state$, satisfying 
$
\|f(x,u)\|_2
=
\sqrt{v^2 + \omega^2}
\le
\sqrt{v_{\max}^2 + \omega_{\max}^2}.
$

\subsection{Operator Learning for HJI-VI}\label{sec:HJRFNO}
Inspired by \cite{li2025:hjrno}, we implement the FNO architecture of Sec.~\ref{sec:fno} to learn the solution operator $\operator\learned$ of HJI-VI \eqref{eq:hji-vi}, enabling zero-shot super-resolution of the reachable set. $\operator\learned$ maps the obstacle configuration $g(\state)$ and time constant $t$ to the viscosity solution $V\trueVal(\state,t)$; thus the solution characterizes $t$-horizon BRAS. Hereafter, the superscript $\dagger$ denotes the true solution and subscript $\hyparam$ the learned approximation. 

The safe set is defined over the compact state set $\stateset$ as 
\begin{equation} \label{eq:safeSet}
    \safe := \left\{ \state \doteq (\pose,\theta) \in \stateset \;\middle|\; \|\pose\|_2 \leq r_{\ell} \right\},
\end{equation}
i.e., a closed cylinder of radius $r_{\ell}$ centered at the origin.

 For the $i$-th obstacle with center $\pose_{\obs_i}$ and radius $r_{\obs_i}$, define $g_i(\state) = r_{\obs_i} - \|\pose - \pose_{\obs_i}\|_2$, so $g_i \geq 0$ characterizes its interior. Given at most $H$ observed obstacles, the unsafe set is
\begin{equation}\label{eq:unsafeSet}
    \unsafe := \left\{ \state \doteq (\pose,\theta) \in \stateset \;\middle|\; g(\state) \geq 0 \right\},
\end{equation}
where $g(\state) = \max_{1 \leq i \leq H} g_i(\state)$ encodes the union of all obstacles via point-wise maximum.

As in \cite{li2025:hjrno}, 
we parameterize $(x,y) \in D\local \subset \real^2$ as the primary input domain and treat the remaining state as hyperparameter. We further extend this formulation by introducing time as an additional hyperparameter to explicitly capture finite-time reachability. 
Specifically, the hyperparameter vector is defined as $(\theta,t) \in \mathbf{h} \subset \mathbb{S}^1\times\real_{\geq0}$. Under this representation, the learning problem is formulated as
\begin{equation}\nonumber
    a(\state) = g(x,y,\mathbf{h}), \quad y(\state) = V(x,y,\mathbf{h}).
\end{equation}
We omit $\safe$ from the inputs since it is fixed across all training instances, 
and the network implicitly encodes it through training.

The training dataset is generated using \texttt{toolboxLS} and \texttt{helperOC}~\cite{mitchell2007:toolboxLS},
which numerically solve \eqref{eq:hji-vi} on a coarse spatial grid over discrete time instances, yielding
$
\mathcal{S}_{\rm train}
=
\big\{
\big(g_{j}(x,y,\mathbf{h}), \; V_{j}\trueVal(x,y,\mathbf{h})\big)
\big\}_{j=1}^{N},
$
where each pair corresponds to a $2$D spatial slice of the obstacle signed distance field $g(x,y,\theta_j)$ and viscosity solution $V\trueVal(x,y,\theta_j,t_j)$ evaluated at hyperparameter pair $(\theta_j, t_j)$. Sampling and aggregating across the hyperparameter domain hence yields an approximated solutions $V\learned(\state,t)$ to \eqref{eq:hji-vi}.
The FNO model is trained by minimizing
\begin{equation}\label{eq:FNOloss}
    \min_{\hyparam}
    \frac{1}{N}\sum_{j=1}^N
   (1-\lambda)\left\| e_j\right\|_{\infty}
     + \lambda
    \left\| e_j \right\|_{2},
\end{equation}
where $e_j:=V_{j}\trueVal(x,y,\mathbf{h}) - \operator\learned ( g_{j}(x,y,\mathbf{h}) )$, the $L^\infty$ term penalizes the worst-case value error, the $L^2$ term regulates the average pointwise error for training stability, and $\lambda \in[0,1]$.

\subsection{Safety Guarantees for FNO-based HJ Reachability}\label{sec:safeTheoryForFNO}

Given dynamics \eqref{eq:unicycleModel} and definitions \eqref{eq:safeSet} and \eqref{eq:unsafeSet}, $f(\state,\ctrl,\disturb)$ $\ell(\state)$ and $g(\state)$ are bounded, Lipschitz continuous in $\state$, and continous in $\ctrl$ and $\disturb$. Thus, $V\trueVal(\state,t)$ is a bounded, Lipschitz continuous function \cite{margellos2011:hjReachAvoid}. By Rademacher's theorem, $V\trueVal(\state,t)$ is differentiable almost everywhere (a.e.) in $(\state,t)$-space \cite[Theorem 3.2]{evans1992:MeasureTA}. 

In FNO setting, let $\overline{D} = D\local\cup \mathbf{h}=\stateset \times[0,T]$ be a compact domain. Define $\operatorTrue(g(\state),t) := V\trueVal(\state,t)$ and hence the FNO model $\operator\learned$ is approximating a solution operator mapping $\mathbb{C}^0(\overline D ; \real)\to\mathbb{C}^{0}(\overline D ; \real)$ as well as $\mathbb{W}^{1,\infty}(\overline D ; \real)\to\mathbb{W}^{1,2}(\overline D ; \real)$. We now state the following assumptions:

\begin{assum}\label{assum:goodFNO}
    Let the FNO model $\operator\learned$ with large enough $B$ Fourier Layers be sufficiently trained via \eqref{eq:FNOloss}, such that for some $\varepsilon,\varepsilon_0>0$ it satisfies universal approximation in \cref{theorem:uat}:
    \begin{equation}
    \nonumber
    \| V\learned(\state,t) - V\trueVal(\state,t)\|_{\infty} 
        \leq \varepsilon, \text{ and }
    \end{equation}
    \begin{equation}
        \nonumber
        \small \|\nabla_\state V\learned - \nabla_\state V\trueVal\|_{L^2} 
        \leq \| V\learned(\state,t) - V\trueVal(\state,t)\|_{\mathbb{W}^{1,2}}
        \leq \varepsilon_0,
    \end{equation}
    where  $ V\trueVal(\state,t) := \operatorTrue(g(\state),t)
    $ and $V\learned(\state,t) := \operator\learned(g(\state),t).
    $
\end{assum}
On compact subsets of the domain, \cref{theorem:uat} guarantees that any continuous operator between appropriate function spaces can be approximated arbitrarily well by an FNO. \cref{assum:goodFNO} postulates that the trained model $\operator\learned$ used in our framework attains this approximation accuracy, 
satisfying the error bounds $\varepsilon$ and $\varepsilon_0$. Thus, $\varepsilon \to 0$ implies pointwise convergence on $\stateset \times [0,T]$. Since convergence in $\mathbb{W}^{1,2}$ implies convergence of the gradients in $L^2$, $\varepsilon_0 \to 0$ ensures that the gradient error vanishes in the mean-square sense.

Under \cref{assum:goodFNO}, we now provide \Cref{lem:sublevelApprox} which establishes FNO underapproximation of the true reach-avoid set. Then \Cref{cor:localshiftValue} 
establishes a translation invariance property of BRAS. This property allows the FNO model $\operator\learned$, 
which is trained over the local domain $D\local$, to be deployed in the global environment $\Omega$ for our planning framework.

\begin{lemma}[$\varepsilon$-Sublevel Set Underapproximation]\label{lem:sublevelApprox}
Given any signed distance function $g(\state)$ defined over the  $\stateset\times[0,T]$, define 
the true and predicted reachable sets as
$
\reach\trueVal(t) := \{\state \in \stateset : V\trueVal(\state,t) \le 0\}
$ and $
\reach\learned(t) := \{\state \in \stateset : V\learned(\state,t) \le 0\},
$ respectively.
Then the $\varepsilon$-sublevel of the learned set
\begin{equation}\label{eq:underapprox_reach}
\reach\learned^{\varepsilon}(t)
:=
\{\state \in \stateset : V\learned(\state,t) \le -\varepsilon\}
\end{equation}
satisfies the under-approximation
$
\reach\learned^{\varepsilon}(t)
\subset
\reach\trueVal(t).
$
\end{lemma}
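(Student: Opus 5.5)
The plan is to derive the inclusion directly from the uniform error bound in \cref{assum:goodFNO}, using only a pointwise triangle-type inequality. Fix an arbitrary $t \in [0,T]$ and an arbitrary state $\state \in \reach\learned^{\varepsilon}(t)$, so that $V\learned(\state,t) \le -\varepsilon$ by \eqref{eq:underapprox_reach}. The goal is to show $V\trueVal(\state,t) \le 0$, which by the definition of $\reach\trueVal(t)$ as the zero sublevel set gives $\state \in \reach\trueVal(t)$. Since $\state$ and $t$ are arbitrary, this yields $\reach\learned^{\varepsilon}(t) \subset \reach\trueVal(t)$ for every $t$.

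The key step is to convert the sup-norm bound $\|V\learned - V\trueVal\|_{\infty} \le \varepsilon$ from \cref{assum:goodFNO} into the pointwise inequality $|V\learned(\state,t) - V\trueVal(\state,t)| \le \varepsilon$ at the chosen point. Once that is available, the argument is one line:
\begin{equation}\nonumber
V\trueVal(\state,t) \le V\learned(\state,t) + \big|V\trueVal(\state,t) - V\learned(\state,t)\big| \le -\varepsilon + \varepsilon = 0 .
\end{equation}
I would also note in passing that the argument is valid for whichever obstacle encoding $g$ is supplied. This holds because \cref{assum:goodFNO} is stated with $V\trueVal := \operatorTrue(g(\state),t)$ and $V\learned := \operator\learned(g(\state),t)$ for the given $g$, and is interpreted as holding uniformly over the compact family of admissible signed distance fields, consistent with the $\sup_{g\in\mathcal{K}}$ form of \cref{theorem:uat}.

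The only subtle point, and the one I expect to need care, is passing from the $\|\cdot\|_\infty$ norm to a genuinely pointwise bound. In general an $L^\infty$ norm controls values only almost everywhere. Here, however, both functions are continuous on the compact domain $\overline{D} = \stateset \times [0,T]$. For $V\trueVal$ this follows from its Lipschitz continuity, cited from \cite{margellos2011:hjReachAvoid}. For $V\learned$ it follows because the FNO is taken as a map into $\mathbb{C}^0(\overline D;\real)$, as set up before \cref{assum:goodFNO}. The difference $V\learned - V\trueVal$ is therefore continuous, so its essential supremum coincides with its pointwise supremum. Hence the bound holds at every $(\state,t)$, not merely almost everywhere, and the argument above applies to every state in $\reach\learned^{\varepsilon}(t)$ with no exceptional null set.
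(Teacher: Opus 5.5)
Your proof is correct and follows the paper's argument: both use the uniform bound from \cref{assum:goodFNO} to get $V\trueVal(\state,t) \le V\learned(\state,t) + \varepsilon \le 0$ for every $\state \in \reach\learned^{\varepsilon}(t)$. Your extra remark that continuity of $V\trueVal$ and $V\learned$ turns the essential-sup bound into a genuinely pointwise one is a small rigor improvement that the paper leaves implicit.
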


\begin{proof}
Since $\stateset\times[0,T]$ is compact, \cref{assum:goodFNO} guarantees a uniform $L^\infty$ error bound on the FNO-approximation
$
\left\|V\trueVal(\state,t) - V\learned(\state,t)\right\|_{\infty}
\le \varepsilon.
$
Hence,
$
V\trueVal(\state,t)
\le
V\learned(\state,t) + \varepsilon
$ in the worst case.
If $\state \in \reach\learned^{\varepsilon}(t)$, then
$V\learned(x,t) \le -\varepsilon$, and therefore
$
V\trueVal(\state,t)
\le
-\varepsilon + \varepsilon
=
0,
$
which implies $\state \in \reach\trueVal(t)$. The uniform value closeness guarantees that the $\varepsilon$-shifted learned sublevel set lies entirely within the true reachable set, thereby certify finite-time reachability property of the learned set.
\end{proof}

\begin{corollary}\label{cor:localshiftValue}
Let $V\learned(x,y,\theta',t)$ denote a $\theta'$-slice of the value function defined on $(x,y)\in D\local$ associated with the safe set $\safe$ in \eqref{eq:safeSet}.
For a safe set $\safe\glob$ centered at $\pose_{\ell}=(x_{\ell},y_{\ell})$ in the global frame, the value function is obtained by spatial translation: $V\glob\learned(x,y,\theta',t) = V\learned(x-x_{\ell},y-y_{\ell},\theta',t).$ This translation preserves the reachability property, i.e.,
$
V\learned\glob(x,y,\theta',t) \le -\varepsilon
 \Longleftrightarrow
(x,y,\theta') \in 
\reach\learned^{\varepsilon}(t)
\subset
\reach\trueVal(t)
$
for all $(x-x_{\ell}, y-y_{\ell})\in D\local$ and $t \in [0, T]$.
\end{corollary}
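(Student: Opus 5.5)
The argument combines translation invariance of the dynamics, which I verify directly, with \cref{lem:sublevelApprox} applied in the local frame. I read the sets in the corollary in the global frame: $\reach\learned^{\varepsilon}(t)$ and $\reach\trueVal(t)$ are taken for the translated problem with target $\safe\glob$ and unsafe set $\unsafe\glob$.

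\textbf{Step 1: invariance of the dynamics under planar translation.} The unicycle model \eqref{eq:unicycleModel} has a right-hand side that does not depend on $(x,y)$. Hence the shift $T_{\pose_\ell}(x,y,\theta)=(x-x_\ell,y-y_\ell,\theta)$ maps trajectories to trajectories. Precisely, for any admissible $\ctrl(\cdot),\disturb(\cdot)$ we have $T_{\pose_\ell}\,\xi^{\ctrl,\disturb}_{\state,t}(\tau)=\xi^{\ctrl,\disturb}_{T_{\pose_\ell}\state,t}(\tau)$, by uniqueness of solutions since $f$ is Lipschitz. The strategy sets $\ctrlset,\disturbset$ are the same in both frames.

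\textbf{Step 2: invariance of the cost and of the true value.} By \eqref{eq:safeSet} and Assumption~\ref{assum:disjoinSafeUnsafe}, $\ell\glob(\state)=\ell(T_{\pose_\ell}\state)$. Likewise, each obstacle function satisfies $g_i\glob(\state)=r_{\obs_i}-\|\pose-\pose_{\obs_i}\|$, which equals $g_i$ evaluated at the shifted state with the shifted obstacle center $\pose_{\obs_i}-\pose_\ell$. Taking the pointwise maximum gives $g\glob=g\circ T_{\pose_\ell}$, where $g$ is the local obstacle field. Substituting into $J$ yields $J\glob(\state,t,\ctrl,\disturb)=J(T_{\pose_\ell}\state,t,\ctrl,\disturb)$ for every strategy pair, so taking $\inf\sup$ gives $V\trueVal{}\glob(\state,t)=V\trueVal(T_{\pose_\ell}\state,t)$. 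It follows that the global true BRAS is exactly $T_{\pose_\ell}^{-1}$ of the local one. An alternative check is that $V\trueVal\circ T_{\pose_\ell}$ is a viscosity solution of \eqref{eq:hji-vi}, because $\nabla_\state$ commutes with translation and $\ham$ is independent of $(x,y)$; uniqueness of the viscosity solution then gives the same conclusion.

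\textbf{Step 3: transfer the learned bound.} The learned global value is defined as $V\glob\learned=V\learned\circ T_{\pose_\ell}$, using the FNO output on the shifted obstacle field. Suppose $(x-x_\ell,y-y_\ell)\in D\local$ and $t\in[0,T]$. Then $V\glob\learned(\state,t)\le-\varepsilon$ holds if and only if $T_{\pose_\ell}\state$ lies in the local $\reach\learned^{\varepsilon}(t)$; this is the stated equivalence. The local $\reach\learned^{\varepsilon}(t)$ is contained in the local $\reach\trueVal(t)$ by \cref{lem:sublevelApprox}, using the bound of Assumption~\ref{assum:goodFNO} on $\stateset\times[0,T]$. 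Applying $T_{\pose_\ell}^{-1}$ and Step 2 then gives containment in the global true BRAS.

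\textbf{Main obstacle: the domain restriction.} The FNO bound holds only on the local compact domain, so obstacles and trajectories must be confined to $D\local$. Obstacles are confined by the definition of $\unsafe\glob_i$, which only includes obstacles in $D\local_i$. For trajectories, I would argue that the value is computed on the truncated domain $D\local$ in both frames, with identical boundary treatment. Translation then commutes with the truncation, and the equivalence holds exactly for points with $(x-x_\ell,y-y_\ell)\in D\local$, which is the hypothesis of the corollary.
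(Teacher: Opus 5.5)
Your proof is correct and follows the same route as the paper. The paper justifies the corollary by noting that $\safe\glob$, the obstacle field $g\glob(\state)=g(\pose-\pose_\ell)$, and hence the value function $V\glob(\pose,\theta,t)=V(\pose-\pose_\ell,\theta,t)$ are spatial translates of their local counterparts, and then invokes \cref{lem:sublevelApprox}; your Steps 1--2 (the unicycle right-hand side does not depend on $(x,y)$, so trajectories, $J$, and $V\trueVal$ all commute with the shift) fill in what the paper simply asserts by definition, and reading the sets in the global frame is what makes the stated equivalence literally true.
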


As mentioned in \cref{fn:global_frame},
the safe set $\safe\glob$ centered at $\pose_{\ell} \in \Omega$ is a spatial translation of $\safe$ centered at the origin. The corresponding value function is
$V\glob(\pose,\theta,t) = V(\pose - \pose_{\ell},\theta,t)$. The associated unsafe set $\unsafe\glob$ is defined with $g\glob(\state) = g(\pose - \pose_{\ell})$, where each obstacle $\obs \in \Omega$ 
is translated by $(\pose_{\obs} - \pose_{\ell})$ into $D\local$, for all $\pose \in \Omega \subset \mathbb{R}^2$.

\subsection{Contingency-Aware Multi-Goal Planning Framework}\label{sec:nominalPlanning}
\subsubsection{RRT$^{\rm X}$ and Traveling Salesman Problem}
Before presenting the full algorithm, we first outline the key components underlying our approach.
RRT$^{\rm X}$ \cite{otte2016:rrtx} maintains a state-space graph 
$\Graph = (\Node, \Edge)$ with vertex set $\Node$, edge set $\Edge$, and a shortest-path tree $\Tree \subseteq \Graph$ rooted at $\vNode_{\rm goal}$. Each vertex $\vNode \in \Tree$ stores a parent pointer encoding its least-cost path to $\vNode_{\rm goal}$ and a cost-to-goal $c_{\Tree}(\vNode)$, with $c_{\Tree}(\vNode) = \infty$ 
if $\vNode \notin \Tree$. As a variant of RRT$^*$\cite{karaman2011:rrtStar}, it inherits the same asymptotic optimality conditions but roots $\Tree$ at $\vNode_{\rm goal}$, enabling direct cost propagation from the goal. When an unknown obstacle is detected, each invalidated vertex searches its running 
neighborhood $\mathcal{N}_r(\vNode)$ for the lowest-cost parent candidate and triggers a local rewiring cascade until all of the best available parents are connected to $\Tree$. We refer the reader to \cite{otte2016:rrtx} for further details. Given a separate tree $\Tree_i$ rooted at each $\vNode_{\mathrm{goal}_i} \in \mathbf{P}_{\mathrm{goals}}$, the inter-goal cost $c_{ij} = c_{\Tree_j}(\vNode_{\mathrm{goal}_i})$ is read directly from each tree. These costs 
form an asymmetric TSP solved via the Bellman--Held--Karp algorithm 
\cite{heldkarp1962:dp}, which produces globally optimal visitation order $\mathbf{P}_{\mathrm{goals}}(\sigma^*)$, where $\sigma^*$ is the optimal permutation index over $\mathbf{P}_{\mathrm{goals}}$.

Together, the local rewiring of RRT$^{\rm X}$ and the zero-shot super-resolution of the FNO form a tightly coupled update mechanism: upon detecting a new obstacle, the FNO rapidly recomputes the high-resolution reachable set, and RRT$^{\rm X}$ propagates the induced cost changes locally instead of global replanning. The resulting planner (\cref{alg:tsp-rrtx}) optimally visits $K \geq 1$ goals while adapting efficiently to dynamic obstacles.

\begin{algorithm}[t]
\caption{Contingency-Aware Multi-Goals Planning Algorithm}\label{alg:tsp-rrtx}

\scriptsize \KwIn{ FNO models $\{\operator_{\hyparam_l}\}_{l=1}^{L}$, Set of safe regions $\mathbf{L}=\{\safe\glob_l\}_{l=1}^{L}$, Set of goal locations $\mathbf{P}_{\rm goals}$, Set of unknown obsatcles $\mathbf{O}=\{\obs_h\}_{h=1}^H$}

\scriptsize$\mathbf{O}_{\rm known}\gets\emptyset$\;
\scriptsize$\{\Tree_k\}_{k=1}^{K} \gets$ RRT$^{\rm X}$ $\{ \Graph_k\}_{k=1}^K$ rooted at $\{\pose_{\mathrm{goal},k}\}_{k=1}^{K};$ \

\scriptsize$\state\gets(\pose_{\mathrm{goal}_1},\theta);$ \tcp{\scriptsize robot starts at goal 1}

\scriptsize$\sigma^* \gets$ Optimal tour \cite{heldkarp1962:dp} on $\mathbf{P}_{\rm goals}$\;

\scriptsize$\Omega_{\rm feasible} = \bigcup_{l=1}^L \Omega_{{\rm reach}_l} \gets \bigcup_{l=1}^L\operator_{\hyparam_l}(g_h)$ \;

\For{$i = 2,\dots,K$}{

    
    \While{$\pose_{\rm robot}\neq\mathbf{P}_{\rm goals}(\sigma^*_i)$}{


        \scriptsize$\sigma_{\rm rem}=\{\sigma^*_j\}_{j=i}^{K};$  \tcp{\scriptsize unvisited indices}
        
        \scriptsize$\sigma_{\rm invalid}=\emptyset;$  \tcp{\scriptsize unreachable indices}

        \scriptsize$\Tree_{\sigma^*_i} \gets$ RRT$^{\rm X}$ extending $\Graph_{\sigma^*_i}$ \cite[Alg. 1]{otte2016:rrtx};
        
        \scriptsize$\state=(\pose_{\rm robot},\theta)\gets f(\state,\ctrl_\vNode), \quad \ctrl_\pose=\ctrl(\Tree_{\sigma^*_i});$\
        
        \scriptsize$\mathbf{O}_{\rm new} \gets \{\obs_h\in\mathbf{O}\setminus\mathbf{O}_{\rm known} \mid\|\pose_{\obs_h}  - \pose_{\rm robot}\| \leq r_{\rm sense}\};$\
        
        \If{$\mathbf{O}_{\rm new} \neq \emptyset$}{

            \scriptsize$\mathbf{O}_{\rm known}\gets\mathbf{O}_{\rm known}\cup\mathbf{O}_{\rm new}$\;
        
            \scriptsize$\Omega_{\rm feasible}\gets \bigcup_l \operator_{\hyparam_l}(g_h), \quad \forall\safe\glob_l\in\mathbf{L}, \exists\obs_h\in\mathbf{O}_{\rm new}$ 
            s.t. $D\local_l \cap\obs_h \neq \emptyset$\;

            \For{$k\in\sigma_{\rm rem}$}{
                \scriptsize$\Tree_k \gets$ RRT$^{\rm X}$ rewiring cascade\cite[Alg. 8]{otte2016:rrtx} s.t. $\Tree_k\in \Omega_{\rm feasible} \setminus \mathbf{O}_{\rm known};$

                \If{$\pose_{\rm robot}\notin\Tree_k$}{
                \scriptsize$\sigma_{\rm invalid}=\sigma_{\rm invalid}\cup \{k\};$
                }
            }

            \scriptsize$\sigma^*_{\rm rem} \gets$ Optimal tour \cite{heldkarp1962:dp} on $\pose_{\rm robot} \cup \mathbf{P}_{\rm goals}(\sigma_{\rm rem})$\;

            \scriptsize$\sigma^*\gets \{ \sigma_1,\dots,\sigma_{i-1},\sigma_{\rm rem}^*\}$\;
        }


        \If{$\scriptsize\texttt{adversary} \lor (\sigma_{\rm rem}=\sigma_{\rm invalid})$}{

            \scriptsize$l^\star \gets \arg\min_{l\in\{0,\dots,L\}} V_{\hyparam_l}\glob(\state,t)$\;

            \While{$\pose_{\rm robot} \notin \safe\glob_{l^\star}$}{
                \scriptsize$\ctrl^\star \gets$ Contingency policy \eqref{eq:contingencyPolicy}\;
                \scriptsize$\state=(\pose_{\rm robot},\theta) \gets f(\state,\ctrl^\star)$\;
            }

            \For{$k\in\sigma_{\rm invalid}$}{
                \scriptsize$\Tree_{k} \gets$ RRT$^{\rm X}$ extending $\Graph_{k}$ \cite[Alg. 1]{otte2016:rrtx};
            }
        }
    }
}
\end{algorithm}

\subsubsection{Reachability Constraint for Nominal Planner}\label{sec:rrtx-reach-constraint}To ensure contingency behaviour is enforced throughout nominal mission, 
we propose a reachability constraint on the planning space, which guarantees 
finite-time reachability to at least one safety region $\safe\glob_i \in 
\mathbf{L}$ at every state for all times.

Let $V_{\hyparam}\glob(x,y,\theta,T)$ denote the value function at time $T$ for safe set $\safe\glob$ centered at $\pose_{\ell}=(x_{\ell}, y_{\ell})$. Define the heading-agnostic value function as a point-wise minimum over $\theta$:
\begin{equation}\label{eq:heading_agnos_V}
    \overline V_{\hyparam}\glob(x,y,T) := \min_{\theta\in\Theta} V_{\hyparam}\glob(x,y,\theta,T),
\end{equation}
with BRAS:
$
\overline{\reach}_{\hyparam}^{\varepsilon}(T)=
\{(x,y)\in\Omega \mid \overline V_{\hyparam}\glob(x,y,T)\le -\varepsilon\}.
$ 

For simplicity, given the $\mathrm{i}^{\rm th}$ safe set $\safe\glob_i$, denote the $T$-time $2$D BRAS as $\Omega_{\mathrm{reach}_i} = \overline{\reach}_{\hyparam,i}^{\varepsilon}(T)$. For an environment $\Omega$ with safe sets $\mathbf{L}=\{\safe\glob_i\}_{i=1}^L$, we define the feasible planning region as $ \Omega_{\rm feasible} := \bigcup_{i=1}^L \Omega_{{\rm reach}_i}$. 
Note that by definitions \eqref{eq:underapprox_reach} and \eqref{eq:heading_agnos_V}, the inclusion $\overline{\reach}\learned^{\varepsilon}(T) \subset \reach\learned^{\varepsilon}(T) 
\subset \reach\trueVal(T)$ holds. This ensures that $\Omega_{\rm feasible}$ is a 
valid conservative feasible region for nominal planning that is agnostic to robot's orientation and gurantees reachability toward the safe set within finite-time $T$. Visualization is provided in \cref{fig:reach-constraint}.

\begin{figure}[t]
    \centering
    \includegraphics[width=1\linewidth]{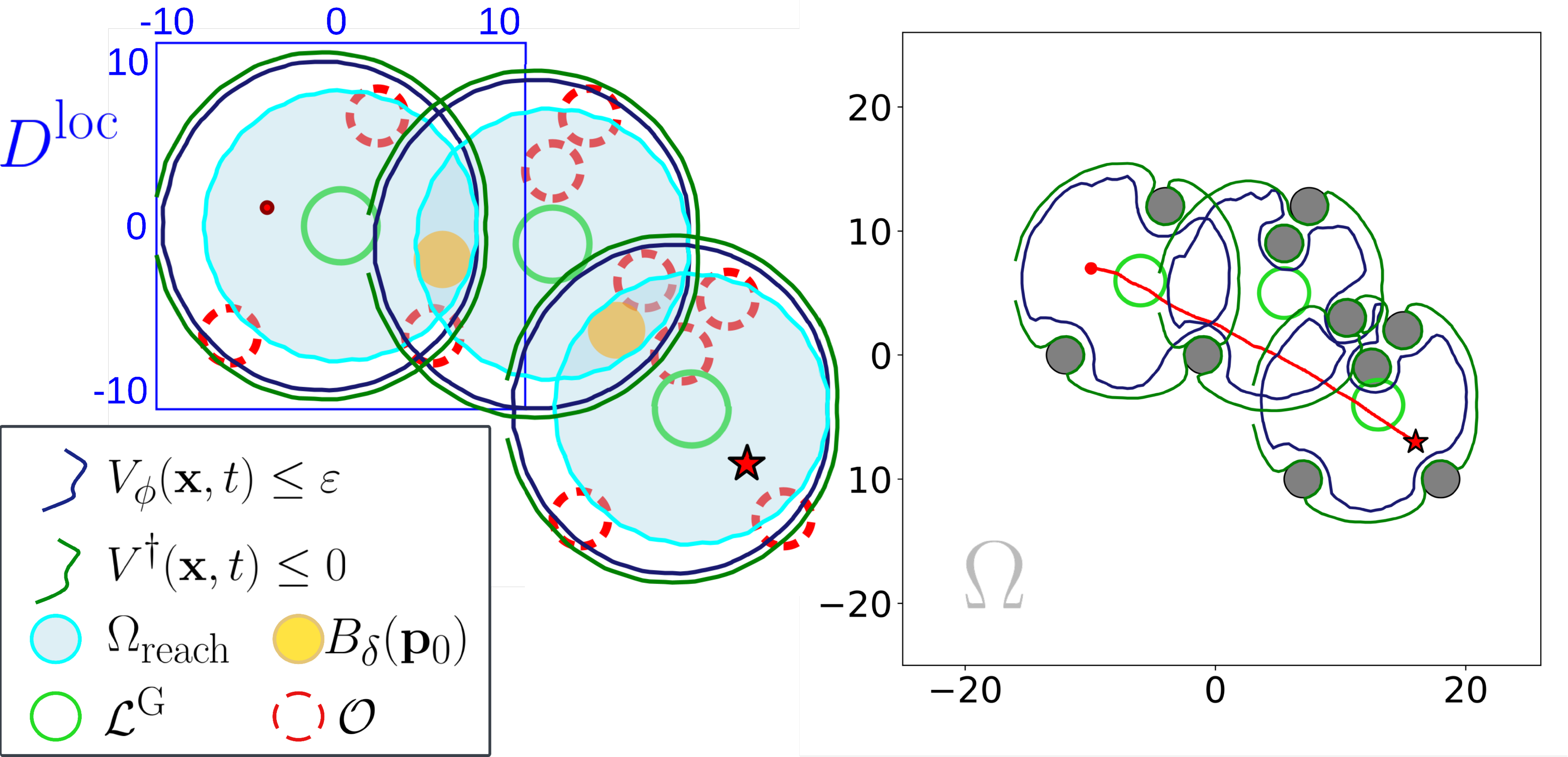}
    \caption{Overview of the planning framework in \cref{alg:tsp-rrtx} with $K=1$ goal (red star) and initially unknown obstacles (red, dashed). The $\varepsilon$-sublevel set of the FNO-predicted reachable set (dark blue) is guaranteed to under-approximate the true reachable set (green). The FNO model is trained over a local domain $D\local$ (blue) and integrated into planning within the global environment $\Omega$ (grey). Prior to planning, the heading-agnostic reachable sets (cyan) must sufficiently overlap to contain a closed Euclidean ball (yellow), ensuring feasibility of the nominal planner with step length $\delta$ across adjacent sets. The final trajectory toward the goal is shown in red.}
    \label{fig:reach-constraint}
\end{figure}

Additionally, we impose a \textit{sufficient overlap} condition on the adjacent reachable sets to enable the nominal planner to extend across the two sets.

\begin{lemma}[Sufficient Overlap of Reachable Sets]\label{lem:overlap}
    Let $\safe\glob_i$ and $\safe\glob_j$ be safe sets centered at 
    $\pose_{\ell_i}$ and $\pose_{\ell_j}$ with associated reachable regions 
    $\Omega_{{\rm reach}_i}$ and $\Omega_{{\rm reach}_j}$, respectively. 
    Let $\delta > 0$ denote the maximum step length of a nominal planner. 
    A sufficient condition for the planner to admit extensions across 
    $\Omega_{{\rm reach}_i}$ and $\Omega_{{\rm reach}_j}$ is:
    \begin{equation}\small\nonumber
        \exists\, \pose_0 \in \Omega_{{\rm reach}_i} \cap \Omega_{{\rm reach}_j}
        \;\; \text{s.t.} \;\;
        B_\delta(\pose_0) \subseteq \Omega_{{\rm reach}_i} \cap \Omega_{{\rm reach}_j},
    \end{equation}
    where $B_\delta(\pose_0) := \{\pose \in \mathbb{R}^2 \mid \|\pose - \pose_0\|_2 
    \leq \delta\}$ is the closed Euclidean ball of radius $\delta$ centered at $\pose_0$.
\end{lemma}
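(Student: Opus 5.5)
The plan is to make precise what ``admit extensions across'' means and then verify it directly from the ball condition. I will model the nominal planner in the RRT$^{\rm X}$ style: from a tree vertex $\vNode$ it steers toward a sample $\wNode$ and produces the point $\vNode' = \vNode + \min\{1,\delta/\|\wNode-\vNode\|\}(\wNode-\vNode)$. The extension is accepted only if the straight segment $[\vNode,\vNode']$ lies in $\Omega_{\rm feasible}\setminus\mathbf{O}_{\rm known}$. Since every $\Omega_{{\rm reach}_i}\subset\Omega_{\rm feasible}$ is built from $\varepsilon$-sublevel sets, it already excludes $\unsafe\glob$. So I only need to check containment of segments in the reachable sets.

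``Extension across'' I will interpret as follows: there exist vertices $\vNode\in\Omega_{{\rm reach}_i}$ and $\vNode'\in\Omega_{{\rm reach}_j}$ connected by a sequence of admissible steps of length at most $\delta$. Every intermediate segment must remain inside $\Omega_{{\rm reach}_i}\cup\Omega_{{\rm reach}_j}$. I will also insist that the sequence passes through a point whose full $\delta$-step neighbourhood belongs to both sets. That last requirement is what makes the handover between the two certified regions well defined.

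\textbf{Step 1.} Take $\pose_0$ from the hypothesis. Since $B_\delta(\pose_0)$ is convex, every segment $[\pose_0,\wNode]$ with $\|\wNode-\pose_0\|\le\delta$ lies in $B_\delta(\pose_0)\subseteq\Omega_{{\rm reach}_i}\cap\Omega_{{\rm reach}_j}$. Hence any single steering step out of $\pose_0$, in any direction, is collision-free and stays in both reachable sets. The same holds for any step into $\pose_0$ from a point of $B_\delta(\pose_0)$.

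\textbf{Step 2.} Sampling. I will argue that $B_\delta(\pose_0)$ has positive Lebesgue measure. Under uniform sampling, RRT$^{\rm X}$ (like RRT$^*$) therefore places a vertex in any fixed positive-measure subset of $B_\delta(\pose_0)$ with probability one as the number of samples grows. The tree is grown inside $\Omega_{{\rm reach}_i}$ and $\Omega_{{\rm reach}_j}$ separately. Each region is a superlevel region of a continuous function, so its interior is open, and a vertex can be reached in each region near $\pose_0$. Any vertex in $B_\delta(\pose_0)$ lies in both sets. Two vertices inside $B_{\delta/2}(\pose_0)$ are within distance $\delta$ of each other, so they can be connected in one step whose segment stays inside the ball by convexity. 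This joins the subtree in $\Omega_{{\rm reach}_i}$ to the subtree in $\Omega_{{\rm reach}_j}$.

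\textbf{Step 3.} Safety of the joined path. Every point of the joining segment lies in $\Omega_{{\rm reach}_i}\cap\Omega_{{\rm reach}_j}\subset\reach\trueVal$ for both safe sets, by the inclusion chain following \eqref{eq:heading_agnos_V}, \cref{lem:sublevelApprox} and \cref{cor:localshiftValue}. Hence the contingency guarantee of finite-time reachability to some $\safe\glob$ holds along the crossing.

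The main obstacle is Step 2, which must turn the purely geometric condition into a statement about the planner actually producing such vertices. I would handle it by invoking the probabilistic completeness of RRT$^{\rm X}$ inherited from RRT$^*$ \cite{karaman2011:rrtStar,otte2016:rrtx}, applied to the positive-measure set $B_{\delta/2}(\pose_0)$. Because the lemma claims only a sufficient condition, this ``with probability one'' level of rigor should be acceptable.
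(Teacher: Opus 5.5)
Your Step~1 is the paper's proof. The paper only observes that every point of $B_\delta(\pose_0)$ can be joined to $\pose_0$ by a single step of length at most $\delta$ that never leaves $\Omega_{{\rm reach}_i}\cap\Omega_{{\rm reach}_j}$, and your convexity remark is the implicit reason this holds. The paper reads the lemma as this local, deterministic geometric statement, so for the lemma as intended you are done after Step~1.

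Step~3 is a harmless add-on via \cref{lem:sublevelApprox} and \cref{cor:localshiftValue}, with one caveat. Because $\overline V\glob_{\hyparam}$ is a \emph{minimum} over $\theta$, membership in $\Omega_{{\rm reach}_i}$ only certifies \emph{some} heading, not the robot's actual one. That caveat comes with the inclusion chain you cite.

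Step~2, which you call the main obstacle, is not needed, and as written it does not hold.
\begin{itemize}
\item Positive Lebesgue measure of $B_{\delta/2}(\pose_0)$ only ensures that samples eventually fall there. RRT$^{\rm X}$ inserts a sample as a vertex only if it can be wired, by an admissible segment, to an existing vertex within the connection radius. A sample that lands in the ball while the tree is still far away is instead saturated toward the nearest vertex or discarded.
\item So you first need the tree to reach a neighbourhood of $\pose_0$. Probabilistic completeness gives you that only if there is a positive-clearance path from the root to $\pose_0$ inside $\Omega_{\rm feasible}\setminus\mathbf{O}_{\rm known}$.
\item The ball hypothesis says nothing about such a path. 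The component of the feasible region containing the root need not contain $\pose_0$ at all.
\item The fact that the lemma states a sufficient condition does not relax this requirement.
\end{itemize}

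There are also two smaller slips in Step~2. First, $\Omega_{{\rm reach}_i}=\{\overline V\glob_{\hyparam}\le-\varepsilon\}$ is a closed \emph{sub}level set, not a superlevel set, and ``its interior is open'' gives you nothing. Second, \cref{alg:tsp-rrtx} grows one tree per goal, confined to the union $\Omega_{\rm feasible}$. It does not grow separate subtrees in each reachable region that must later be joined.
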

\begin{proof}
    The existence of $B_\delta(\pose_0) \subseteq \Omega_{{\rm reach}_i} 
    \cap \Omega_{{\rm reach}_j}$ guarantees that any 
    state within $B_\delta(\pose_0)$ can be connected to $\pose_0$ in a single step of size $\delta$, while remaining within the overlap region. Visualization is provided in \cref{fig:reach-constraint}.
\end{proof}

\subsubsection{Overall Algorithm}
\cref{alg:tsp-rrtx} initializes with $L$ offline-trained FNO models $\operator_{\hyparam_l}$ corresponding to the safe set $\safe\glob_l$. The safe sets are positioned such that their predicted reachable sets are \textit{sufficiently overlap} satisfying \cref{lem:overlap} and collectively defines $\Omega_{\rm feasible}$. If there are no known obstacles initially, true reachable set may be computed offline once. 
We modify the RRT$^{\rm X}$ algorithm such that the graph $\Graph$ remains confined within the dynamically updated BRAS, which restricts planning to the feasible subset $\Omega_{\rm feasible} \subset \Omega$. During node extension, samples are drawn from a mixture of $2$D Gaussian distributions centered at the safe-set representatives $\pose_{\ell_l}$. Within $\Omega_{\rm feasible}$, the shortest-path subtree $\Tree_k$ is initialized and rooted at each $\pose_{\mathrm{goal}_k} \in \mathbf{P}_{\rm goals}$. This induces an asymmetric TSP over $\mathbf{P}_{\rm goals}$ with edge costs $c_{i,j} = c_{\Tree_j}(\pose_{\mathrm{goal}_i})$. If the robot's initial state does not coincide with any goal, it is appended to $\mathbf{P}_{\rm goals}$ with its own subtree $\Tree$. The Held--Karp program is then solved with the robot designated as $\pose_{\mathrm{goal}_1}$ (Lines~1--5).

The robot executes the optimal visiting sequence with $\sigma_{\rm rem} = (\sigma^*_2, \dots, \sigma^*_K)$ initially. For each $\sigma^*_i \in \sigma_{\rm rem}$, the robot traverses along $\Tree_{\sigma^*_i}$ toward $\mathbf{P}_{\rm goals}(\sigma^*_i)$ 
via the tracking controller $\ctrl_\pose = \ctrl(\Tree_{\sigma^*_i})$, where $\ctrl_\pose \doteq (v_\pose, \omega_\pose)$ is resolved geometrically from the planned 
path subject to \eqref{eq:unicycleModel}. Concurrently, $\Graph_{\sigma^*_i}$ continues to grow, maintaining RRT$^*$-like optimal-cost rewiring from $\mathbf{P}_{\rm goals}(\sigma^*_i)$ to the current robot position $\pose_{\rm robot}$ and all remaining $K-1$ goals. Note that RRT$^{\rm X}$ accommodates arbitrary kinodynamic constraints other than \eqref{eq:unicycleModel}, and $\operator\learned$ is adaptable to other robot dynamics given availability of training data (Lines~6--11).

Upon sensing unknown obstacles $\mathbf{O}_{\rm new} \subset \mathbf{O}$ (Line~13), two updates are triggered. First, for each $\obs_h \in \mathbf{O}_{\rm new}$ within the local frame $D\local_l$ of the safe set $\safe_l\glob$ centered at $\pose_{\ell_l}$, the FNO $\operator_{\hyparam_l}$ predicts a new reachable set $\reach\learned^\varepsilon$ given the updated input $g_{h} = \max\{g_{h},\, g_{\rm new}\}$, where $g_{\rm new} = r_{\obs_h} - \|\pose - \pose_{\obs_h}\|_2$ is defined over $(\pose - \pose_{\ell_l}) \in D\local_l$. Second, all $K$ subtrees are updated via the RRT$^{\rm X}$ rewiring cascade \cite[Alg.~8]{otte2016:rrtx} over the updated feasible planning space $\Omega_{\rm feasible}$. During the $i$-th goal visitation, trees $\Tree_k$ with $k \in \sigma_{\rm rem} \setminus \{\sigma^*_i\}$ do not grow concurrently during robot execution, so a path in $\Tree_k$ reaching $\pose_{\rm robot}$ may not exist. Such trees are marked invalid via $\sigma_{\rm invalid}$, and the nearest node $\wNode_{\rm near} \in \Graph_k$ with $\pose_{\rm robot} \in \mathcal{N}_r(\wNode_{\rm near})$ is stored for subsequent rewiring. If no such node exists, $\Graph_k$ is grown with increased bias toward $\pose_{\rm robot}$ during replanning. Upon any graph change, Held--Karp is re-solved  to obtain a new visiting sequence from the robot's current position $\pose_{\rm robot}$, designated as index~$1$, over the remaining goals $\mathbf{P}_{\rm goals}(\sigma_{\rm rem})$. Since $\pose_{\rm robot}$ is no longer a root node of a tree, we encode infeasible transition as $c_{ij} = \infty$ in two cases: (i) $i \neq j = 1$, since no path from $\pose_{\mathrm{goal}_i}$ to $\pose_{\rm robot}$ exists, and (ii) $i = 1,\, j \in \sigma_{\rm invalid}$, since that goal is temporarily unreachable. This yields a minimum-cost feasible sequence $\sigma^*_{\rm rem}$ consistent with the current obstacle configuration, ensuring reactive and locally optimal decision-making throughout the mission (Lines~13--21).

Upon an adversarial encounter (\texttt{adversary}) or when no valid subtree remains, the contingency policy of \cref{sec:ContingencyPlanning} is invoked. The robot is then directed toward the $l^\star$-th safe set $\safe\glob_{l^\star} \in \mathbf{L}$, selected as the one with the smallest $V\learned(\state,t)$ at $\pose_{\rm robot}$. Since $\pose_{\rm robot} \in \Omega_{\rm feasible}$, the safe under-approximation (\cref{lem:sublevelApprox}) and translation invariance (\cref{cor:localshiftValue}) guarantee reachability of $\safe\glob_{l^\star}$ in finite time $T$ whenever $V_{\hyparam_{l^\star}}\glob(\state, t) \leq \varepsilon$. Upon reaching $\safe\glob_{l^\star}$, the robot enters a safe dwell period during which any invalidated subtrees are replanned (Lines~22--28).

\subsection{Reachability-based Contingency Policy}\label{sec:ContingencyPlanning}
In the event of an adversary during nominal planning, we propose a finite-time reachability-based contingency policy that drives the system toward $\safe$ while avoiding $\unsafe$ under worst-case disturbance. The analysis is considered with respect to the local state domain $\state \in \stateset$ over time horizon $t \in [0,T]$. To ensure the policy guides the robot toward the safe set, we augment the FNO-predicted value function $V\learned$ to satisfy the two conditions implied by the HJI-VI: $V\learned(\state,t) \geq g(\state)$ and $\partial_t V\learned(\state,t) + \ham(\state, \nabla_x V\learned(\state,t)) \leq 0$.

\subsubsection{Guarantees on obstacle avoidance}\label{sec:policy:obs_avoid}
The HJI-VI \eqref{eq:hji-vi} requires $V\trueVal(\state,t)\ge g(\state)$ for all $(\state,t)$. To impose this on $V\learned$, we project it onto the admissible set at each time $t$ by
$\tilde V\learned(\state,t) = \max\{V\learned(\state,t),\, g(\state)\}$
for all $(\state,t)\in \stateset\times[0,T]$, following \cite{margellos2011:hjReachAvoid}. This enforces $\tilde V\learned(\state,t)\ge g(\state)$ pointwise, ensuring $\nabla_{\state}\tilde V\learned$ is consistent with the obstacle boundary and does not induce descent into the unsafe sets.

\subsubsection{Reaching terminal state}\label{sec:policy:reach}
To enforce the reach condition, we additionally require $\tilde V\learned$ to satisfy the HJI-VI residual $\mathbf{D}_t(\state,t) := \partial_t \tilde V\learned(\state,t) + \ham(\state,\nabla_x \tilde V\learned(\state,t)) \leq 0$ for all $(\state,t)$, ensuring $\tilde V\learned$ remains a viscosity subsolution of \eqref{eq:hji-vi} along the reaching trajectory.

To enforce the reach condition, we define the violation set $D_{\rm violate} := \{(\state,t) \in \stateset \times [0,T] : \mathbf{D}_t(\state,t) > 0\}$ where $\tilde V\learned$ fails to satisfy the HJI-VI. For a well-trained $\operator\learned$ satisfying \cref{assum:goodFNO} with $\|\nabla_\state V\learned - \nabla_\state V\trueVal\|_{L^2(\stateset)} \leq \varepsilon_0$, we first show that $D_{\rm violate}$ is measure-theoretically negligible for small $\varepsilon_0$. Letting $e = V\learned - V\trueVal$ and assuming the true solution (away from the switching surface) satisfies strict descent $\mathbf{D}_t(\state,t) \le -\alpha_0$, the uniform bound $\|f(\state,\ctrl,\disturb)\| \le M_f = \sqrt{v_{\max}^2 + \omega_{\max}^2}$ on dynamics \eqref{eq:unicycleModel} implies $\ham(\state,p)$ is Lipschitz in $p$ with constant $M_f$, which yields $|\mathbf{D}_t(\state,t)| \le |\partial_t e| + M_f\|\nabla_\state e\|$. The temporal derivative error is empirically subdominant to the spatial gradient term, satisfying $ |\partial_t e| \le \rho M_f\|\nabla_\state e\|$ with $\rho<1$. Hence a violation requires $\|\nabla_\state e\| > \alpha_0/(M_f(1+\rho))$, and applying the Chebyshev inequality \cite[Chapter~6.3]{folland:2013real} yields $\mathrm{meas}(D_{\rm violate}) \le (1+\rho)^2M_f^2\varepsilon_0^2/\alpha_0^2$ (i.e., a bound on the Lebesgue measure of the state-space region where the HJI-VI is violated), which remains small relative to $\stateset$ as demonstrated in \cref{sec:results}.

With the above formulation, when $D_{\rm violate}$ is active, we replace the gradient of the learned value function with the precomputed offline substitute $\tilde V\trueVal_{\safe}(\state,t) = \max\{V\trueVal_{\safe}(\state,t),\, g(\state)\}$ for a small time step, where $V\trueVal_{\safe}$ is the obstacle-free true value function satisfying the HJI-PDE. Thus, we guarantee that $\tilde V\trueVal_{\safe}(\state,t) \geq g(\state)$ and $\mathbf{D}_t(\state,t) \leq 0$ for all $(\state,t) \in \stateset \times [0,T]$.

We now formalize a contingency control policy that guarantees finite-time 
reachability to $\safe$.
Let $\ctrl^\star\learned(\state,t)$ and $\ctrl^\star_{\safe}(\state,t)$ denote the optimal controls 
computed via \eqref{eq:hjr-opt-ctrl} using $\nabla_\state \tilde{V}\learned(\state,t)$ 
and $\nabla_\state \tilde{V}\trueVal_{\safe}(\state,t)$, respectively. The contingency 
switching policy is defined as
\begin{equation}\label{eq:contingencyPolicy}
    \ctrl^\star(\state,t)=
    \begin{cases}
    \ctrl^\star\learned(\state,t), & (\state,t)\notin D_{\rm violate},\\
    \ctrl^\star_{\safe}(\state,t), & (\state,t)\in D_{\rm violate},
    \end{cases}
\end{equation}
Outside $D_{\rm violate}$, $\tilde{V}\learned$ satisfies the HJI-VI descent condition, enforcing strict decrease toward $\safe$ along certified reach-avoid trajectories. Within $D_{\rm violate}$, the fallback $\tilde{V}\trueVal_{\safe}$ acts as a control Lyapunov function admitting strict descent toward $\safe$, which will drive the state out of the violation region. Thus, we apply $\ctrl^\star_{\safe}$ until the trajectory exits $D_{\rm violate}$ in finite time, thereby guaranteeing that switching remains transient. Since $\reach\learned^{\varepsilon}(t) \subseteq \reach\trueVal(t)$, any state with $\tilde{V}\learned(\state,t) \leq 0$ satisfies $g(\state) \leq 0$, ensuring the fallback control preserves forward invariance of $\safe$ and cannot drive the system into $\unsafe$.

\section{Simulation Results}\label{sec:results}

\subsection{FNO Training and Implementation}
A dataset of $|\mathcal{S}_{\rm train}|=100$ samples was generated, each with a fixed $\safe$ centered at the origin in $D\local=[-10,10]^2$ and a single randomly generated obstacle $(x_\obs, y_\obs) \in D\local$, $r_\obs \in [0.5, 2]$. Training on a single obstacle per sample suffices, as the operator generalizes to multiple obstacles by evaluating local spatial region independently. The training data is computed over $T=8$ on a $[50, 50, 25, 33]$ grid in $[x, y, \theta, t]$, governed by \eqref{eq:unicycleModel} with $v \in [0,1]$, $\omega \in [-1,1]$, and $\disturb = [0.1,0.1,0.1]$. The FNO comprises $B=4$ Fourier layers with lifting dimension $d_v=64$, input channels $\{g(\state), x, y, \theta, t\}$ ($d_a=5$), output $\{V\learned\}$ ($d_y=1$), and ReLU activations. Training ran for $3000$ epochs with Adam, taking ${\approx}9$ hours on an Intel i7, 16\,GB RAM, NVIDIA RTX 4050. During online inference, we leverage FNO zero-shot super-resolution to predict a reachable set over finer grid of size $[100,100,17,17]$ for $[x,y,\theta,t]$. Since $(\theta,t)$ are hyperparameter, we limit the total number of query points.

\subsubsection{Evaluating universal approximation theorem for FNO}
We generated $|\mathcal{S}_{\rm test}| = 100$ additional test data to evaluate the universal convergence of the HJI-VI solution operator in \cref{assum:goodFNO}, with metrics summarized in \cref{tab:fno_metrics} across $1500$--$3000$ training epochs. Let $e = V\learned - V\trueVal$.

\noindent \textbf{Underapproximation guarantee.} At each reported epoch, we provide the $L^\infty$ upper bound $\varepsilon \geq \|e\|_{\infty}$ over $|\mathcal{S}_{\rm test}|$ to validate the underapproximation guarantees in \cref{lem:sublevelApprox}. First, we define the \textit{inclusion volume}, i.e., the proportion of the learned reachable set lying inside the true reachable set, evaluated over the $\stateset$-grid for all $t \in [0,T]$:
$
\eta_{\rm include} := \mathrm{Vol}(\reach\learned \cap \reach\trueVal) / \mathrm{Vol}(\reach\learned).
$ We report $\eta_{\rm include}$ for both the $\varepsilon$- and $0$-sublevel sets of the learned reachable set. Constraining to the $\varepsilon$-sublevel set achieves $\eta_{\rm include}=1$ across all training epochs, guaranteeing $\reach\learned^\varepsilon(t)\subset\reach\trueVal(t)$ throughout the contingency maneuver. Additionally, the $0$-sublevel set yields a close approximation as evidenced by the small mean squared error (MSE) in order of $10^{-4}$.
\begin{table}[b!]
\centering
\caption{FNO Training and Approximation Metrics}
\label{tab:fno_metrics}
\large
\resizebox{\columnwidth}{!}{%
\begin{tabular}{cccccccc}
\toprule
\makecell{\textbf{Training} \\ \textbf{Epochs}} &
\makecell{\textbf{$\varepsilon$} \\ \textbf{($L^\infty$)}} &
\makecell{\textbf{$\eta_{\rm include}$} \\ $(\reach\learned^\varepsilon \ / \ \reach\learned)$} &
\makecell{\textbf{$\varepsilon_0$} \\ \textbf{($\mathbb{W}^{1,2}$)}} &
\makecell{\textbf{$\mathbf{meas}(D_{\rm violate})$} \\ \textbf{over $\mathbf{meas}(\stateset)$}} &
\makecell{\textbf{MSE} \\ \textbf{$[\times 10^{-4}]$}} \\
\midrule
1500  &  0.483 &  1.0 / 0.9927 &  0.118&  2.465 \%& 9.62 \\
2000  &  0.455 &  1.0 / 0.9928 &  0.114&  2.281 \%&  8.60\\
2500  &  0.442&   1.0 / 0.9937&   0.110&  2.138 \%&  7.84\\
3000  &  0.431&   1.0 / 0.9935&   0.107&  2.011 \%&  7.06\\
\bottomrule
\end{tabular}%
}
\end{table}
\textbf{Violation measure bound.} We evaluate the $L^2$ gradient error $\varepsilon_0 \geq \|\nabla_\state e\|_{L^2}$ to bound $\mathrm{meas}(D_{\rm violate})$ relative to $\stateset$, as discussed in \cref{sec:policy:reach}. For our unicycle dynamics \eqref{eq:unicycleModel}, $M_f = \sqrt{2}$. Our data shows uniformly that $\|\partial_t e\|_{L^2} \le 0.404 M_f\|\nabla_\state e\|_{L^2}$. Assuming a descent margin $\alpha_0 = 0.03,$
the normalized violation fraction satisfies $\mathrm{meas}(D_{\rm violate}) / \mathrm{meas}(\stateset) \leq 1.404^2M_f^2\varepsilon_0^2 / (\alpha_0^2 \cdot 800\pi)$, where $\mathrm{meas}(\stateset) = 800\pi$ for $\stateset \doteq D\local \times \mathbb{S}^1$. After $3000$ training epochs, the violation region occupies only $\approx 2.01\%$ of the state space, indicating that gradient inconsistencies are confined to a small volumetric subset rather than forming a connected separating manifold. The violation region is therefore measure-negligible, ensuring the switching contingency control law \eqref{eq:contingencyPolicy} is well-posed over $\stateset$ at each time step.

\subsection{Case studies}
\subsubsection{Contingency planning case study}
We evaluate the contingency policy \eqref{eq:contingencyPolicy} for finite-time reachability toward $\safe$ within $T=8$ across scenarios with $1$--$7$ a priori unknown obstacles. The robot's initial state is randomly generated such that it neither intersects any obstacle nor lies outside $\reach\learned^{\varepsilon}(t)$ for $t\in[4,8]$. At each run, we first determine the minimum time $t_{\min}\in[4,8]$ such that the reachable set contains the robot's state, i.e., $V\learned(\state,t_{\min})\leq0$, then apply control policy \eqref{eq:contingencyPolicy} under worst-case disturbance until $\safe$ is reached. Upon detection of an unknown obstacle, $V\learned(\state,t)$ is updated, $t_{\min}$ is recomputed, and the total duration $T_{\rm reach}$ is accumulated. A run is marked as a failure if either $g(\state)>0$ (obstacle collision) or $V\learned(\state,0)>0$ (outside the reachable set at final time). Each scenario is repeated over $500$ runs, with metrics including success rate, average $T_{\rm reach}$, and for failures, the average value function at final time and distance to $\safe$, summarized in \cref{tab:contingency}.

\begin{table}[b!]
\centering
\caption{Contingency Policy Case Studies (500 runs each)}
\label{tab:contingency}
\vspace{-3pt}
\renewcommand{\arraystretch}{1.2}
\setlength{\tabcolsep}{6pt}
\footnotesize
\begin{tabular}{@{}ccccc@{}}
\toprule
\textbf{Scenario} & \textbf{Success} & \makecell{\textbf{Avg.} \\ \textbf{$T_{\rm reach}$}} & \makecell{\textbf{Avg.\ $V(\mathbf{x},0)$} \\ \textbf{(failure)}} & \makecell{\textbf{Avg. Dist. to $\safe$} \\ \textbf{(failure)}} \\
\midrule
1 Obs & 100\% & 4.564 &  --- & --- \\
3 Obs & 98.00\% & 4.600 &  0.054 & 1.544 \\
5 Obs & 96.80\% & 4.769 &  0.111 & 1.582 \\
7 Obs & 95.60\% & 4.707 &  0.019 & 2.046 \\
\bottomrule
\end{tabular}
\end{table}

Overall, the policy achieves a success rate above $95.6\%$ across all unknown obstacle configurations under worst-case disturbance. Stronger guarantees is shown for the single-obstacle case where $V\learned$ updates less frequently. All failures are attributed solely to $V\learned(\state,0)>0$; frequent obstacle configuration changes cause the policy in \eqref{eq:contingencyPolicy} to act optimally at current time step but may yield suboptimal trajectories over subsequent steps due to unpredictable obstacles. Nonetheless, by conditioning on the reachable set as in \cref{sec:policy:obs_avoid}, $g(\state)\leq 0$ always holds and no obstacle collision occurs. Additionally, failure cases remain close to the reachable set boundary, with $|V\learned(\state,0)| \leq 0.111$ and small average distance to $\safe$ at final time. 

\subsubsection{Multi-goals Planning Framework Case Study}
We evaluate our framework for $K = 6$ goals over $\Omega = [-25,25]^2$, with each RRT$^{\rm X}$ tree initialized with $n_\Node = |\Node| = 2{,}000$ nodes. The goal set $\mathbf{P}_{\rm goals}$ is fixed and hand-selected to ensure feasibility, with the robot's initial position coinciding with a randomly selected goal per trial. We consider two extreme cases: a fully \textit{unknown} and an \textit{a priori} known obstacle map, the latter serving as a baseline for asymptotically optimal convergence of our planner. Additionally, the framework is evaluated with and without the reachability-based constraints in \cref{sec:rrtx-reach-constraint}. Performance is measured by total traversed distance (Dist.) and execution time (Time), and the results are averaged over 10 trials and reported in \cref{tab:tsp-rrtx}. Sample runs are shown in \cref{fig:TSP}.

Under unknown obstacles, trajectory costs remain comparable to the known-map baseline, which demonstrates that locally optimal goal sequencing remains effective in large-scale environments. The increased execution time arises from RRT$^{\rm X}$ rewiring cascades upon obstacle discovery.
Imposing reachability constraint empirically increases execution time, as nodes sampled near the boundary of $\Omega_{\rm feasible}$ are frequently rejected during tree expansion---a limitation similarly identified in \cite{janos2021:sff}, which is mitigated by eliminating Voronoi bias in the sampling strategy. Nonetheless, the constrained planner converges to a minimum-cost path and goals sequence with comparable distances to unconstrained cases.

\begin{figure}[t]
    \centering
    \subfigure[Known map \& no reachability]{\includegraphics[width=0.45\linewidth]{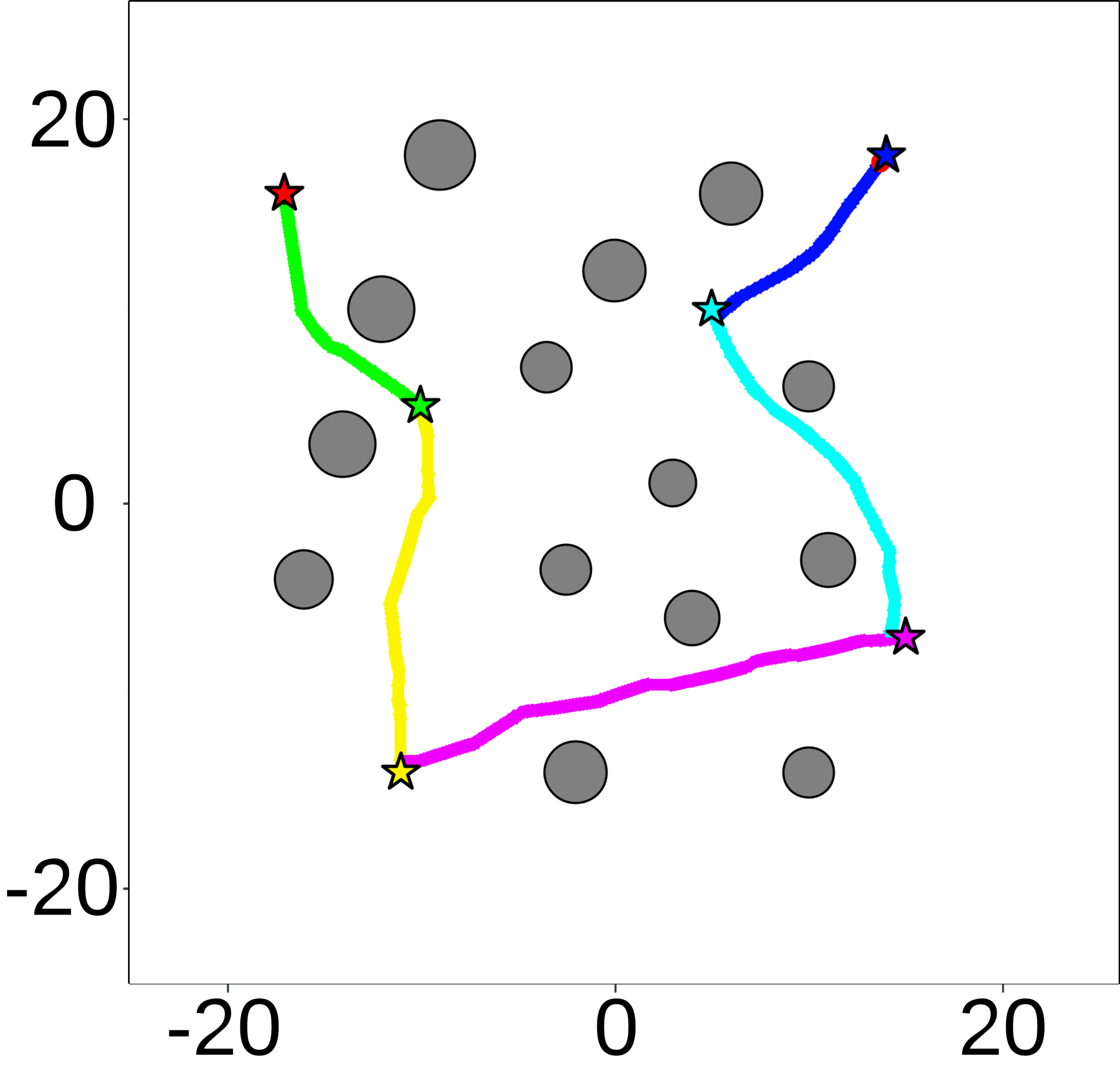}}
    \subfigure[Unknown map \& reachability]{\includegraphics[width=0.45\linewidth]{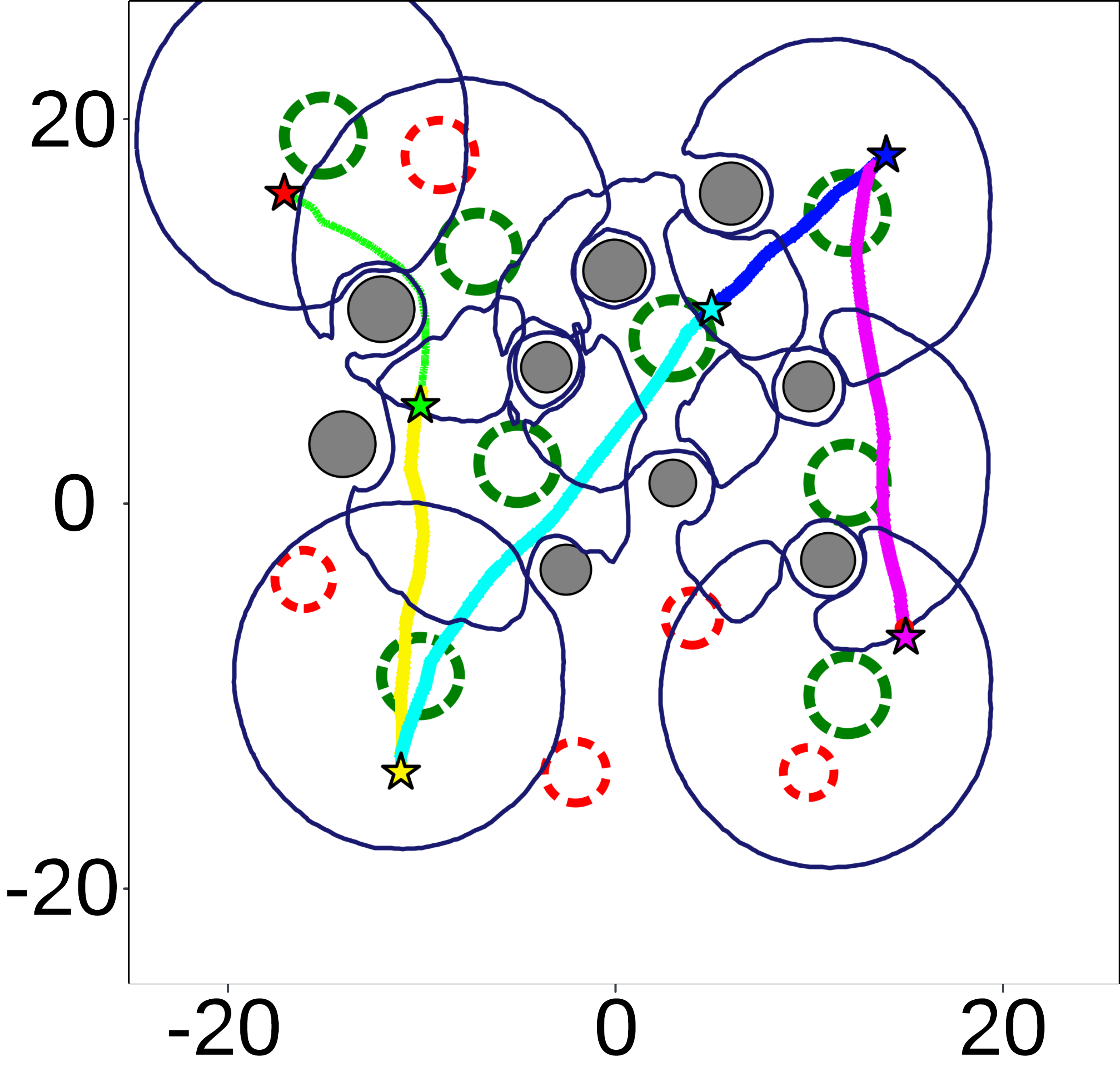}}
    \caption{Comparison of two cases in the multi-goal planning and routing framework: (a) an environment with known obstacles (grey) and no reachable-set constraint for contingency planning; (b) an environment in which all obstacles are a priori unknown (red, dashed) and later detected (grey), with a guaranteed finite-time reachability strategy to a safe set (green, dashed) enforced at every state along the trajectory. The robot starts at the red star, and trajectory colors denotes the sequence of the goals visited.}
    \label{fig:TSP}
\end{figure}

\begin{table}[b!]
\centering
\caption{Multi-Goals Planning Case Studies (10 runs each)}
\label{tab:tsp-rrtx}
\vspace{-5pt}
\renewcommand{\arraystretch}{1.2}
\setlength{\tabcolsep}{6pt}
\footnotesize
\begin{tabular}{@{}lcccc@{}}
\toprule
& \multicolumn{2}{c}{\textbf{Unknown Map}} & \multicolumn{2}{c}{\textbf{Priori-Known Map}} \\
\cmidrule(lr){2-3} \cmidrule(lr){4-5}
\textbf{RRT$^{\rm X}$ Constraint} & \textbf{Dist.} & \textbf{Time} & \textbf{Dist.} & \textbf{Time} \\
\midrule
$\pose \in \Omega_{\rm feasible}$ & 105.614 & 89.494\textbf{s} & 105.428 & 64.741\textbf{s} \\
$\pose \in \Omega$               & 103.879 & 47.604\textbf{s} & 102.568 & 42.067\textbf{s} \\
\bottomrule
\end{tabular}
\end{table}

\subsubsection{KUKA's youBot simulation} We tested our framework in real-time on KUKA's youBot in Webots simulation as seen in \cref{fn:youtube} and shown in \cref{fig:sim_youbot}. The environment is of size $\Omega=[-10\rm m,10\rm m]^2$ and the youBot is equppied with LIDAR sensor with range $r_{\rm sense}=5.5\rm m$. When wooden crates (obstacles) enter the sensing range of the robot, the robot detects their boundaries and constructs circular over-approximations, which demonstrates the generalizability of our FNO model trained with circular obstacle representations. When following RRT$^{\rm X}$ path, we apply two-phase control strategy: in-place turning $\ctrl_\vNode=(0,\omega_\vNode)$ and forward translation $\ctrl_\vNode=(v_\vNode,0)$. During execution, adversarial events are randomly triggered to activate the contingency controller \eqref{eq:contingencyPolicy}, at which point the robot switches to the certified gradient-based feedback law. Linear and angular velocity commands are mapped to individual wheel velocities. Despite multiple contingency activations (e.g., during the initial approach to the green goal), the robot safely recovers and completes all remaining goals under a locally optimal sequencing policy.

\begin{figure}[t]
    \centering
    \includegraphics[width=1\linewidth]{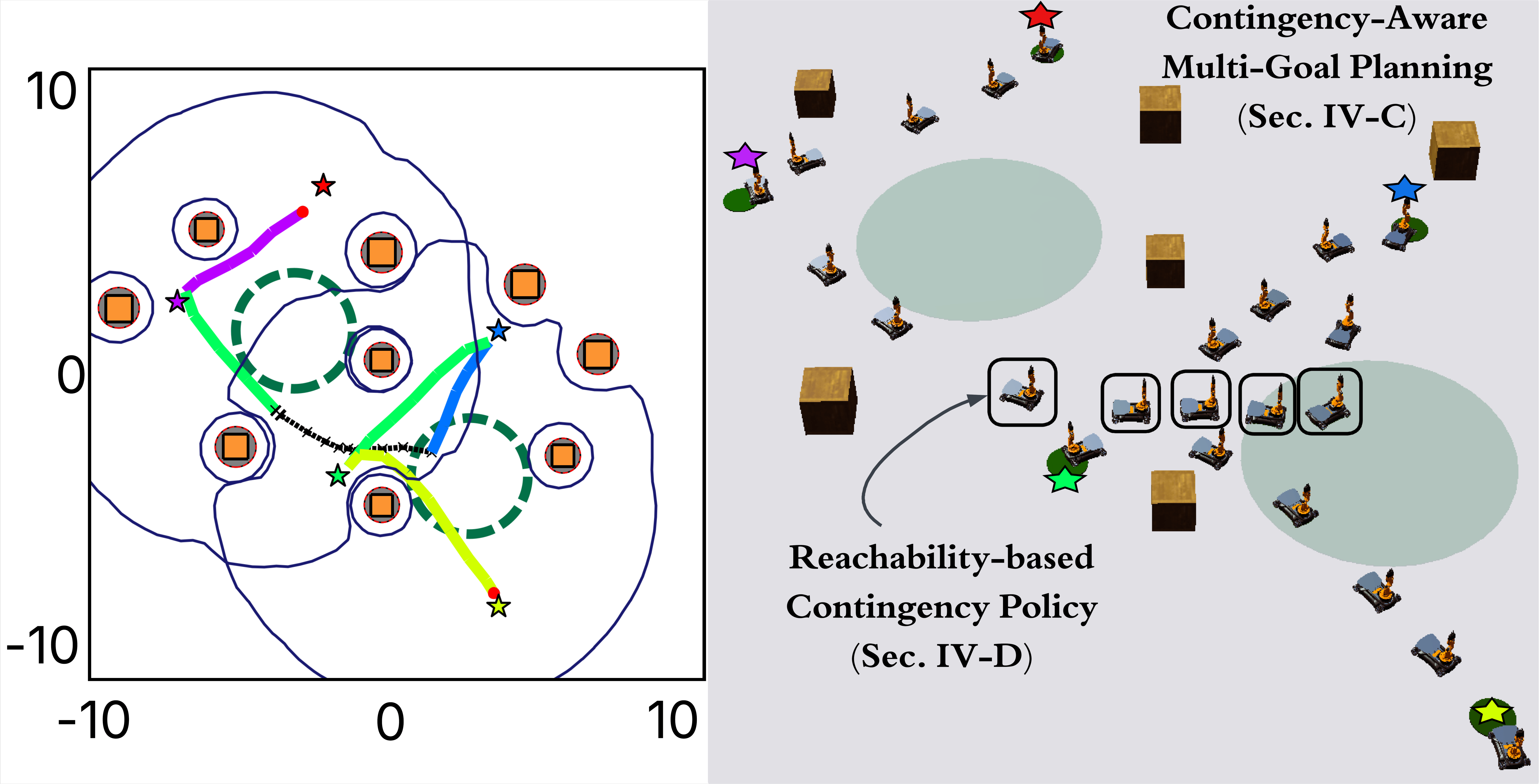}
    \caption{Contingency-aware multi-goal planning on KUKA’s youBot simulation. On the plot, the robot initializes at the red star, and the trajectory color encodes the active goal index. While navigating toward the green star, an adversarial event is triggered and activates the contingency mechanism, prompting the robot to execute a certified fallback policy (black, squared) toward the safe set (green, dashed). Upon reaching the safe set, the robot resumes the mission and visits the remaining goals in a locally optimal sequences.}
    \label{fig:sim_youbot}
\end{figure}

\textbf{Time complexity.} RRT$^{\rm X}$ costs $\mathbb{O}(n_\Node \log n_\Node)$ and Held--Karp solves with $\mathbb{O}(K^2 \cdot 2^K)$.
In a dynamic environment with $H$ obstacles, each update may trigger a full replanning cycle, raising the worst-case total to $\mathbb{O}\!\left(H \cdot K^2 \cdot 2^K \cdot n_\Node \log n_\Node\right)$. Nonetheless, $2^K$ term remains as the dominant bottleneck. Thus, our algorithm is viable in real-time for $K \leq 8$.

\section{Conclusion}
In this work, we develop a solution operator learning framework for Hamilton–Jacobi reach–avoid differential game using a Fourier Neural Operator and establish a provably safe under-approximation of the true backward reachable set through universal approximation theorem. We further certify the associated control policy and guarantee finite-time reachability to a designated safe set under worst-case disturbances. We integrate the certified reachable set into an online, contingency-aware multi-goal planning framework that achieves asymptotically optimal routing while ensuring disturbance-robust safety and state-wise feasibility of a recovery strategy in unknown environments.

\bibliographystyle{IEEEtran}
\bibliography{main}

\end{document}